\newtheorem{proposition}{Proposition}
\newtheorem{theorem}{Theorem}
\newtheorem{assumption}{Assumption}
\newtheorem{lemma}[theorem]{Lemma}
\newtheorem{corollary}[theorem]{Corollary}
\theoremstyle{definition}
\newtheorem{definition}{Definition}
\newtheorem{example}{Example}
\newcommand{\B}[1]{\mathbb{#1}}
\newcommand{\com}[1]{\textcolor{ForestGreen}{#1}}
\newcommand{\vv}[1]{\mathrm{vec}{(#1)}}
\newcommand{\step}[1]{\texttt{step}(#1)}
\newcommand{\fmodel}{F}
\newcommand{\tablespace}{\vspace{1.5pt}}
\newcommand{\tableshrink}{\vspace{-10pt}}
\title{Meta-Adaptive Nonlinear Control: \\ Theory and Algorithms}
\author{%
  Guanya Shi$^\dagger$, Kamyar Azizzadenesheli$^\ddagger$, Michael O'Connell$^\dagger$, Soon-Jo Chung$^\dagger$, Yisong Yue$^\dagger$ \\
  $^\dagger$Caltech \; $^\ddagger$Purdue University \\
  \texttt{\{gshi,moc,sjchung,yyue\}@caltech.edu, kamyar@purdue.edu} \\
}
\begin{document}

\maketitle

\begin{abstract}
We present an online multi-task learning approach for adaptive nonlinear control, which we call Online Meta-Adaptive Control (OMAC). The goal is to control a nonlinear system subject to adversarial disturbance and unknown \emph{environment-dependent} nonlinear dynamics, under the assumption that the environment-dependent dynamics can be well captured with some shared representation. Our approach is motivated by robot control, where a robotic system encounters a sequence of new environmental conditions that it must quickly adapt to. A key emphasis is to integrate online representation learning with established methods from control theory, in order to arrive at a unified framework that yields both control-theoretic and learning-theoretic guarantees. We provide instantiations of our approach under varying conditions, leading to the first non-asymptotic end-to-end convergence guarantee for multi-task nonlinear control. OMAC can also be integrated with deep representation learning. Experiments show that OMAC significantly outperforms conventional adaptive control approaches which do not learn the shared representation, in inverted pendulum and 6-DoF drone control tasks under varying wind conditions\footnote{Code and video: \url{https://github.com/GuanyaShi/Online-Meta-Adaptive-Control}}.
\end{abstract}

\section{Introduction}
One important goal in autonomy and artificial intelligence is to enable autonomous robots to learn from prior experience to quickly adapt to new tasks and environments. Examples abound in robotics, such as a drone flying in different wind conditions \citep{o2021meta}, a manipulator throwing varying objects \citep{zeng2020tossingbot}, or a quadruped walking over changing terrains \citep{lee2020learning}. Though those examples provide encouraging empirical evidence, when designing such adaptive systems, two important theoretical challenges arise, as discussed below.

First, from a learning perspective, the system should be able to learn an ``efficient'' representation from prior tasks, thereby permitting faster future adaptation, which falls into the categories of representation learning or meta-learning. Recently, a line of work has shown theoretically that learning representations (in the standard supervised setting) can significantly reduce sample complexity on new tasks~\citep{du2020few,tripuraneni2020provable,maurer2016benefit}. Empirically, deep representation learning or meta-learning has achieved success in many applications~\citep{bengio2013representation}, including control, in the context of meta-reinforcement learning \citep{gupta2018meta,finn2017model,nagabandi2018learning}. However, theoretical benefits (in the end-to-end sense) of representation learning or meta-learning for adaptive control remain unclear.

Second, from a control perspective, the agent should be able to handle parametric model uncertainties with control-theoretic guarantees such as stability and tracking error convergence, which is a common adaptive control problem~\citep{slotine1991applied,aastrom2013adaptive}. For classic adaptive control algorithms, theoretical analysis often involves the use of Lyapunov stability and asymptotic convergence~\citep{slotine1991applied,aastrom2013adaptive}. Moreover, many recent studies aim to integrate ideas from learning, optimization, and control theory to design and analyze adaptive controllers using learning-theoretic metrics. Typical results guarantee non-asymptotic convergence in finite time horizons, such as regret \citep{boffi2020regret,simchowitz2020naive,dean2019sample,lale2020logarithmic,kakade2020information} and dynamic regret \citep{li2019online,yu2020power,lin2021perturbation}. However, these results focus on a single environment or task. A multi-task extension, especially whether and how prior experience could benefit the adaptation in new tasks, remains an open problem.

\textbf{Main contributions.} In this paper, we address both learning and control challenges in a unified framework and provide end-to-end guarantees. We derive a new method of Online Meta-Adaptive Control (OMAC) that controls uncertain nonlinear systems under a sequence of new environmental conditions. The underlying assumption is that the environment-dependent unknown dynamics can well be captured by a shared representation, which
OMAC learns using a \emph{meta-adapter}. OMAC then performs environment-specific updates using an \emph{inner-adapter}.

We provide different instantiations of OMAC under varying assumptions and conditions. In the jointly and element-wise convex cases, we show sublinear cumulative control error bounds, which to our knowledge is the first non-asymptotic convergence result for multi-task nonlinear control.
Compared to standard adaptive control approaches that do not have a meta-adapter, we show that OMAC possesses both stronger guarantees and empirical performance.
We finally show how to integrate OMAC with deep representation learning, which further improves empirical performance.

\section{Problem statement}
We consider the setting where a controller encounters a sequence of $N$ environments, with each environment lasting $T$ time steps. We use \emph{outer iteration} to refer to the iterating over the $N$ environments, and \emph{inner iteration} to refer to the $T$ time steps within an environment.

\textbf{Notations:} We use superscripts (e.g., $(i)$ in $x_t^{(i)}$) to denote the index of the outer iteration where $1\leq i\leq N$, and subscripts (e.g., $t$ in $x_t^{(i)}$) to denote the time index of the inner iteration where $1\leq t\leq T$. We use \step{$i,t$} to refer to the inner time step $t$ at the $i^\mathrm{th}$ outer iteration. $\|\cdot\|$ denotes the 2-norm of a vector or the spectral norm of a matrix. $\|\cdot\|_F$ denotes the Frobenius norm of a matrix and $\lambda_{\min}(\cdot)$ denotes the minimum eigenvalue of a real symmetric matrix. $\vv{\cdot}\in\B{R}^{mn}$ denotes the vectorization of a $m\times n$ matrix, and $\otimes$ denotes the Kronecker product. Finally, we use $u_{1:t}$ to denote a sequence $\{u_1,u_2,\cdots,u_t\}$. 

We consider a discrete-time nonlinear control-affine system \cite{murray2017mathematical,lavalle2006planning} with environment-dependent uncertainty $f(x,c)$. The dynamics model at the $i^\mathrm{th}$ outer iteration is:
\begin{equation}
\label{eq:dynamics}
    x_{t+1}^{(i)} = f_0(x_t^{(i)}) + B(x_t^{(i)})u_t^{(i)} - f(x_t^{(i)},c^{(i)}) + w_t^{(i)}, \quad 1\leq t\leq T,
\end{equation}
where the state $x_t^{(i)}\in\B{R}^n$, the control $u_t^{(i)}\in\B{R}^m$, $f_0:\B{R}^n\rightarrow\B{R}^n$ is a known nominal dynamics model, $B:\B{R}^n\rightarrow\B{R}^{n\times m}$ is a known state-dependent actuation matrix,
$c^{(i)}\subset\B{R}^h$ is the unknown parameter that indicates an environmental condition, $f:\B{R}^n\times\B{R}^h\rightarrow\B{R}^n$ is the unknown $c^{(i)}$-dependent dynamics model, and $w_t^{(i)}$ is disturbance, potentially adversarial. For simplicity we define $B_t^{(i)}=B(x_t^{(i)})$ and $f_t^{(i)}=f(x_t^{(i)},c^{(i)})$. 

\textbf{Interaction protocol.}
We study the following adaptive nonlinear control problem under $N$ unknown environments. 
At the beginning of outer iteration $i$, the environment first selects $c^{(i)}$ (adaptively and adversarially), which is unknown to the controller, and then the controller makes decision $u^{(i)}_{1:T}$ under unknown dynamics $f(x_t^{(i)},c^{(i)})$ and potentially adversarial disturbances $w_t^{(i)}$.  To summarize:
\begin{enumerate}[leftmargin=*]
    \item \textbf{Outer iteration $i$.} A policy encounters environment $i$ ($i \in \{1,\ldots,N\}$), associated with unobserved variable $c^{(i)}$ (e.g., the wind condition for a flying drone). Run inner loop (Step 2).
    \item \textbf{Inner loop.} Policy interacts with environment $i$ for $T$ time steps, observing $x_t^{(i)}$ and taking action $u_t^{(i)}$, with state/action dynamics following \eqref{eq:dynamics}.
    \item Policy optionally observes $c^{(i)}$ at the end of the inner loop (used for some variants of the analysis).
    \item Increment $i=i+1$ and repeat from Step 1.
\end{enumerate}
We use average control error ($\mathsf{ACE}$) as our performance metric:
\begin{definition}[Average control error] \label{def:ace}
The average control error ($\mathsf{ACE}$) of $N$ outer iterations (i.e., $N$ environments) with each lasting $T$ time steps, is defined as $\mathsf{ACE} = \frac{1}{TN}\sum_{i=1}^N\sum_{t=1}^{T}\|x_t^{(i)}\|$.
\end{definition}

$\mathsf{ACE}$ can be viewed as a non-asymptotic generalization of steady-state error in control \cite{aastrom2010feedback}. We make the following assumptions on the actuation matrix $B$, the nominal dynamics $f_0$, and disturbance $w_t^{(i)}$:
\begin{assumption}[Full actuation, bounded disturbance, and e-ISS assumptions] 
\label{assump:ISS}
We consider fully-actuated systems, i.e., for all $x$, $\mathrm{rank}(B(x))=n$. $\|w_t^{(i)}\|\leq W,\forall t,i$. Moreover, the nominal dynamics $f_0$ is exponentially input-to-state stable (e-ISS): let constants $\beta,\gamma\geq0$ and $0\leq\rho<1$. For a sequence $v_{1:t-1}\in\B{R}^n$, consider the dynamics $x_{k+1}=f_0(x_k)+v_k,1\leq k\leq t-1$. $x_t$ satisfies:
\begin{equation}
\label{eq:e-ISS}
    \|x_t\| \leq \beta\rho^{t-1}\|x_1\| + \gamma\sum_{k=1}^{t-1}\rho^{t-1-k}\|v_k\|.
\end{equation}
\end{assumption}

With the e-ISS property in Assumption~\ref{assump:ISS}, we have the following bound that connects $\mathsf{ACE}$ with the average squared loss between $B_t^{(i)}u_t^{(i)}+w_t^{(i)}$ and $f_t^{(i)}$.
\begin{lemma}
\label{thm:ISS}
Assume $x_1^{(i)}=0, \forall i$. The average control error ($\mathsf{ACE}$) is bounded as:
\begin{equation}
    \frac{\sum_{i=1}^{N}\sum_{t=1}^{T}\|x_t^{(i)}\|}{TN} \leq \frac{\gamma}{1-\rho} \sqrt{\frac{\sum_{i=1}^N\sum_{t=1}^T\|B_t^{(i)}u_t^{(i)}-f_t^{(i)}+w_t^{(i)}\|^2}{TN}}.
\end{equation}
\end{lemma}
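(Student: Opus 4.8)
The plan is to treat the closed-loop dynamics \eqref{eq:dynamics} as an instance of the e-ISS template in Assumption~\ref{assump:ISS} and then do two summations followed by one application of Cauchy--Schwarz. Concretely, fix an outer iteration $i$ and set $v_k^{(i)} = B_k^{(i)}u_k^{(i)} - f_k^{(i)} + w_k^{(i)}$, so that $x_{k+1}^{(i)} = f_0(x_k^{(i)}) + v_k^{(i)}$. Applying \eqref{eq:e-ISS} with the initial condition $x_1^{(i)} = 0$ kills the transient term and yields $\|x_t^{(i)}\| \leq \gamma \sum_{k=1}^{t-1} \rho^{t-1-k}\|v_k^{(i)}\|$ for every $t$.

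Next I would sum this bound over $t = 1,\ldots,T$ and swap the order of the double sum over $t$ and $k$, so that each $\|v_k^{(i)}\|$ is multiplied by $\sum_{t=k+1}^{T}\rho^{t-1-k} = \sum_{j=0}^{T-1-k}\rho^{j} \leq \frac{1}{1-\rho}$, using $0 \leq \rho < 1$. This gives $\sum_{t=1}^{T}\|x_t^{(i)}\| \leq \frac{\gamma}{1-\rho}\sum_{k=1}^{T}\|v_k^{(i)}\|$ (extending the inner sum harmlessly to $k=T$). Summing over $i = 1,\ldots,N$ and dividing by $TN$ produces $\frac{1}{TN}\sum_{i}\sum_{t}\|x_t^{(i)}\| \leq \frac{\gamma}{1-\rho}\cdot\frac{1}{TN}\sum_{i}\sum_{t}\|v_t^{(i)}\|$.

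Finally, I would apply the Cauchy--Schwarz (equivalently, Jensen / power-mean) inequality to the empirical average of the $TN$ scalars $\|v_t^{(i)}\|$, namely $\frac{1}{TN}\sum_{i,t}\|v_t^{(i)}\| \leq \sqrt{\frac{1}{TN}\sum_{i,t}\|v_t^{(i)}\|^2}$, and substitute back the definition of $v_t^{(i)}$ to obtain the claimed bound. None of these steps is a real obstacle; the only thing to be careful about is the summation-order swap and the uniform geometric-series bound, and to note that the Cauchy--Schwarz step is where slack is introduced but is deliberate, since the squared-loss form on the right-hand side is exactly what the later online-learning regret analysis controls.
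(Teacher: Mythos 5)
Your proposal is correct and follows essentially the same route as the paper's proof: apply the e-ISS bound with $x_1^{(i)}=0$, exchange the order of summation and bound the geometric series by $\frac{1}{1-\rho}$, then pass from the average of $\|v_t^{(i)}\|$ to the root-mean-square via Cauchy--Schwarz. The only difference is cosmetic: you make the summation swap and the uniform geometric-series bound explicit, whereas the paper compresses them into a single step labelled $(a)$.
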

The proof can be found in the Appendix \ref{sec:proof-lemma1}. We assume $x_1^{(i)}=0$ for simplicity: the influence of non-zero and bounded $x_1^{(i)}$ is a constant term in each outer iteration, from the e-ISS property \eqref{eq:e-ISS}. 

\textbf{Remark on the e-ISS assumption and the $\mathsf{ACE}$ metric.} Note that an exponentially stable linear system $f_0(x_t)=Ax_t$ (i.e., the spectral radius of $A$ is $<1$) satisfies the exponential ISS (e-ISS) assumption. However, in nonlinear systems e-ISS is a stronger assumption than exponential stability. For both linear and nonlinear systems, the e-ISS property of $f_0$ is usually achieved by applying some stable feedback controller to the system\footnote{For example, consider $x_{t+1}=\frac{3}{2}x_t+2\sin{x_t}+\bar{u}_t$. With a feedback controller $\bar{u}_t=u_t-x_t-2\sin{x_t}$, the closed-loop dynamics is $x_{t+1}=\frac{1}{2}x_t+u_t$, where $f_0(x)=\frac{1}{2}x$ is e-ISS.}, i.e., $f_0$ is the closed-loop dynamics \citep{slotine1991applied,cohen2018online,lale2020logarithmic}. e-ISS assumption is standard in both online adaptive linear control \citep{cohen2018online,lale2020logarithmic,simchowitz2020naive} and nonlinear control \citep{boffi2020regret}, and practical in robotic control such as drones~\cite{shi2019neural}. In $\mathsf{ACE}$ we consider a regulation task, but it can also capture trajectory tracking task with time-variant nominal dynamics $f_0$ under incremental stability assumptions \citep{boffi2020regret}. We only consider the regulation task in this paper for simplicity.

\textbf{Generality.} We would like to emphasize the generality of our dynamics model \eqref{eq:dynamics}. The nominal control-affine part can model general fully-actuated robotic systems via Euler-Langrange equations \citep{murray2017mathematical,lavalle2006planning}, and the unknown part $f(x,c)$ is nonlinear in $x$ and $c$. We only need to assume the disturbance $w_t^{(i)}$ is bounded, which is more general than stochastic settings in linear \citep{simchowitz2020naive,dean2019sample,lale2020logarithmic} and nonlinear \citep{boffi2020regret} cases. For example, $w_t^{(i)}$ can model extra $(x,u,c)$-dependent uncertainties or adversarial disturbances. Moreover, the environment sequence $c^{(1:N)}$ could also be adversarial. In term of the extension to under-actuated systems, all the results in this paper hold for the \emph{matched uncertainty} setting~\cite{boffi2020regret}, i.e., in the form $x^{(i)}_{t+1}=f_0(x^{(i)}_t)+B(x^{(i)}_t)(u_t^{(i)}-f(x_t^{(i)},c^{(i)}))+w_t^{(i)}$ where $B(x^{(i)}_t)$ is not necessarily full rank (e.g., drone and inverted pendulum experiments in Section \ref{sec:case-4}). Generalizing to other under-actuated systems is interesting future work.

\section{Online meta-adaptive control (OMAC) algorithm}
The design of our online meta-adaptive control (OMAC) approach comprises four pieces: the policy class, the function class, the inner loop (within environment) adaptation algorithm $\mathcal{A}_2$, and the outer loop (between environment) online learning algorithm $\mathcal{A}_1$.

\textbf{Policy class.} We focus on the class of certainty-equivalence controllers \citep{boffi2020regret,mania2019certainty,simchowitz2020naive}, which is a general class of model-based controllers that also includes linear feedback controllers commonly studied in online control \cite{mania2019certainty,simchowitz2020naive}. After a model is learned from past data, a controller is designed by treating the learned model as the truth \citep{aastrom2013adaptive}. Formally, at \step{$i,t$}, the controller first estimates $\hat{f}^{(i)}_t$ (an estimation of $f_t^{(i)}$) based on past data, and then executes $u_t^{(i)}=B_t^{(i)\dagger}\hat{f}^{(i)}_t$, where $(\cdot)^\dagger$ is the pseudo inverse.
Note that from Lemma~\ref{thm:ISS}, the average control error of the \emph{omniscient controller} using $\hat{f}_t^{(i)}=f_t^{(i)}$ (i.e., the controller perfectly knows $f(x,c)$) is upper bounded as\footnote{This upper bound is tight. Consider a scalar system $x_{t+1}=ax_t+u_t-f(x_t)+w$ with $|a|<1$ and $w$ a constant. In this case $\rho=|a|,\gamma=1$, and the omniscient controller $u_t=f(x_t)$ yields $\mathsf{ACE}=\gamma |w|/(1-\rho)$.}:
\begin{equation*}
\mathsf{ACE}(\mathrm{omniscient})\leq\gamma W/(1-\rho).    
\end{equation*}
$\mathsf{ACE}(\mathrm{omniscient})$ can be viewed as a fundamental limit of the certainty-equivalence policy class. 

\textbf{Function class $\fmodel$ for representation learning.} In OMAC, we need to define a function class $\fmodel(\phi(x;\hat{\Theta}),\hat{c})$ to compute $\hat{f}_t^{(i)}$ (i.e., $\hat{f}_t^{(i)}=\fmodel(\phi(x_t^{(i)};\hat{\Theta}),\hat{c})$), where $\phi$ (parameterized by $\hat{\Theta}$) is a representation shared by all environments, and $\hat{c}$ is an environment-specific latent state. From a theoretical perspective, the main consideration of the choice of $\fmodel(\phi(x),\hat{c})$ is on how it effects the resulting learning objective. For instance, $\phi$ represented by a Deep Neural Network (DNN) would lead to a highly non-convex learning objective. In this paper, we focus on the setting $\hat{\Theta}\in\B{R}^p,\hat{c}\in\B{R}^h$, and $p\gg h$, i.e., it is much more expensive to learn the shared representation $\phi$ (e.g., a DNN) than ``fine-tuning'' via $\hat{c}$ in a specific environment, which is consistent with meta-learning \citep{finn2017model} and representation learning \citep{du2020few,bengio2013representation} practices. 

\textbf{Inner loop adaptive control.} We take a modular approach in our algorithm design, in order to cleanly leverage state-of-the-art methods from online learning, representation learning, and adaptive control. 
When interacting with a single environment (for $T$ time steps), we keep the learned representation $\phi$ fixed, and use that representation for adaptive control by treating $\hat{c}$ as an unknown low-dimensional parameter. We can utilize any adaptive control method such as online gradient descent, velocity gradient, or composite adaptation \cite{hazan2019introduction,boffi2020regret,slotine1991applied,o2021meta}.

\textbf{Outer loop online learning.} We treat the outer loop (which iterates between environments) as an online learning problem, where the goal is to learn the shared representation $\phi$ that optimizes the inner loop adaptation performance. Theoretically, we can reason about the analysis by setting up a hierarchical or nested online learning procedure (adaptive control nested within online learning).

\textbf{Design goal.}
Our goal is to design a meta-adaptive controller that has low $\mathsf{ACE}$, ideally converging to $\mathsf{ACE}(\mathrm{omniscient})$ as $T,N\rightarrow\infty$. In other words, OMAC should converge to performing as good as the omniscient controller with perfect knowledge of $f(x,c)$. 


Algorithm \ref{alg:meta-algorithm} describes the OMAC algorithm.
Since $\phi$ is environment-invariant and $p\gg h$, we only adapt $\hat{\Theta}$ at the end of each outer iteration. On the other hand, because $c^{(i)}$ varies in different environments, we adapt $\hat{c}$ at each inner step. As shown in Algorithm \ref{alg:meta-algorithm}, at \step{$i,t$}, after applying $u_t^{(i)}$, the controller observes the next state $x_{t+1}^{(i)}$ and computes:
$y_t^{(i)} \triangleq f_0(x_t^{(i)}) +  B_t^{(i)}u_t^{(i)} - x_{t+1}^{(i)} = f_t^{(i)}-w_t^{(i)}$,
which is a disturbed measurement of the ground truth $f_t^{(i)}$. We then define $\ell_t^{(i)}(\hat{\Theta},\hat{c}) \triangleq\|\fmodel(\phi(x_t^{(i)};\hat{\Theta}),\hat{c})-y_t^{(i)}\|^2$ as the observed loss at \step{$i,t$}, which is a squared loss between the disturbed measurement of $f_t^{(i)}$ and the model prediction $\fmodel(\phi(x_t^{(i)};\hat{\Theta}),\hat{c})$.

\begin{algorithm}
    \caption{Online Meta-Adaptive Control (OMAC) algorithm}
    \label{alg:meta-algorithm}
    \DontPrintSemicolon
    \SetKwInput{KwSolver}{Solver}
    \SetKwInput{KwObserve}{Observe}
    \SetKwInput{KwInput}{Input}
    \SetKwInput{KwDefine}{Define}
    \SetKwFunction{ObserveEnv}{ObserveEnv}
    \SetKwComment{Comment}{\textcolor{ForestGreen}{//}}{}
    \SetKwComment{CommentQuad}{\quad\textcolor{ForestGreen}{//}}{}
    \KwInput{Meta-adapter $\mathcal{A}_1$; inner-adapter $\mathcal{A}_2$; model $\fmodel(\phi(x;\hat{\Theta}),\hat{c})$; Boolean \ObserveEnv}
    \For{$i=1,\cdots,N$}{
        The environment selects $c^{(i)}$ \\
        \For{$t=1,\cdots,T$}{
        Compute $\hat{f}_t^{(i)} = \fmodel(\phi(x_t^{(i)};\hat{\Theta}^{(i)}),\hat{c}_t^{(i)})$ \\
        Execute $u_t^{(i)} = B_t^{(i)\dagger}\hat{f}_t^{(i)}$ \CommentQuad{\com{certainty-equivalence policy}}
        Observe $x_{t+1}^{(i)}$, $y_t^{(i)}=f(x_t^{(i)},c^{(i)})-w_t^{(i)}$, and $\ell_t^{(i)}(\hat{\Theta},\hat{c}) = \|\fmodel(\phi(x_t^{(i)};\hat{\Theta}),\hat{c})-y_t^{(i)}\|^2$ \Comment{\com{$y_t^{(i)}$ is a noisy measurement of $f$ and $\ell_t^{(i)}$ is the observed loss}}
        Construct an inner cost function $g_t^{(i)}(\hat{c})$ by $\mathcal{A}_2$ \CommentQuad{\com{$g_t^{(i)}$ is a function of $\hat{c}$}}
        Inner-adaptation: $\hat{c}^{(i)}_{t+1} \leftarrow \mathcal{A}_2(\hat{c}^{(i)}_{t},g_{1:t}^{(i)})$
        }
        \lIf{\ObserveEnv}{
            Observe $c^{(i)}$ \textcolor{ForestGreen}{\quad\texttt{//only used in some instantiations}}
        }
        Construct an outer cost function $G^{(i)}(\hat{\Theta})$ by $\mathcal{A}_1$ \CommentQuad{\com{$G^{(i)}$ is a function of $\hat{\Theta}$}}
        Meta-adaptation: $\hat{\Theta}^{(i+1)}\leftarrow\mathcal{A}_1(\hat{\Theta}^{(i)},G^{(1:i)})$ \\
    }
\end{algorithm}

\textbf{Instantiations.}
Depending on $\{\fmodel(\phi(x;\hat{\Theta}),\hat{c}),\mathcal{A}_1,\mathcal{A}_2,\ObserveEnv\}$, we consider three settings:
\begin{itemize}[leftmargin=*]
    \item \textbf{Convex case} (Section \ref{sec:case-1}): The observed loss $\ell_t^{(i)}$ is convex with respect to $\hat{\Theta}$ and $\hat{c}$.
    \item \textbf{Element-wise convex case} (Section \ref{sec:case-2}): Fixing $\hat{\Theta}$ or $\hat{c}$, $\ell_t^{(i)}$ is convex with respect to the other.
    \item \textbf{Deep learning case} (Section \ref{sec:case-4}): In this case, we use a DNN with weight $\hat{\Theta}$ to represent $\phi$. 
\end{itemize}


\section{Main theoretical results}
\label{sec:main_result}

\subsection{Convex case}
\label{sec:case-1}
In this subsection, we focus on a setting where the observed loss $\ell_t^{(i)}(\hat{\Theta},\hat{c})=\|\fmodel(\phi(x;\hat{\Theta}),\hat{c})-y_t^{(i)}\|^2$ is convex with respect to $\hat{\Theta}$ and $\hat{c}$. We provide the following example to illustrate this case and highlight its difference between conventional adaptive control (e.g., \cite{boffi2020regret,aastrom2013adaptive}). 
 
\begin{example}\label{example:convex}
We consider a model $\fmodel(\phi(x;\hat{\Theta}),\hat{c})=Y_1(x)\hat{\Theta}+Y_2(x)\hat{c}$ to estimate $f$:
\begin{equation}
    \hat{f}_t^{(i)} = Y_1(x_t^{(i)})\hat{\Theta}^{(i)} + Y_2(x_t^{(i)})\hat{c}_t^{(i)},
\end{equation}
where $Y_1:\B{R}^n\rightarrow\B{R}^{n\times p},Y_2:\B{R}^n\rightarrow\B{R}^{n\times h}$ are two known bases. Note that conventional adaptive control approaches typically concatenate $\hat{\Theta}$ and $\hat{c}$ and adapt on both at each time step, regardless of the environment changes (e.g., \cite{boffi2020regret}). Since $p\gg h$, such concatenation is computationally much more expensive than OMAC, which only adapts $\hat{\Theta}$ in outer iterations.
\end{example}

Because $\ell_t^{(i)}(\hat{\Theta},\hat{c})$ is \emph{jointly} convex with respective to $\hat{\Theta}$ and $\hat{c}$, the OMAC algorithm in this case falls into the category of Nested Online Convex Optimization (Nested OCO) \citep{agarwal2021regret}. The choice of $g_t^{(i)},G^{(i)},\mathcal{A}_1,\mathcal{A}_2$ and $\ObserveEnv$ are depicted in Table \ref{tab:OMAC_convex}. Note that in the convex case OMAC does not need to know $c^{(i)}$ in the whole process ($\ObserveEnv=\mathrm{False}$).

\renewcommand{\arraystretch}{1.25}
\begin{table}[h]
\begin{small}
    \centering
    \begin{tabular}{|c|p{.8\textwidth}|}
    \hline
       $\fmodel(\phi(x;\hat{\Theta}),\hat{c})$ & Any $\fmodel$ model such that $\ell_t^{(i)}(\hat{\Theta},\hat{c})$ is convex (e.g., Example \ref{example:convex}) \\
       $g_t^{(i)}(\hat{c})$ & $\nabla_{\hat{c}}\ell_t^{(i)}(\hat{\Theta}^{(i)},\hat{c}_t^{(i)}) \cdot \hat{c}$ \\
       $G^{(i)}(\hat{\Theta})$ & $\sum_{t=1}^T \nabla_{\hat{\Theta}}\ell_t^{(i)}(\hat{\Theta}^{(i)},\hat{c}_t^{(i)}) \cdot \hat{\Theta}$ \\
       $\mathcal{A}_1$ & With a convex set $\mathcal{K}_1$, $\mathcal{A}_1$ initializes $\hat{\Theta}^{(1)}\in\mathcal{K}_1$ and returns $\hat{\Theta}^{(i+1)}\in\mathcal{K}_1,\forall i$. $\mathcal{A}_1$ has sublinear regret, i.e., the total regret of $\mathcal{A}_1$ is $T\cdot o(N)$ (e.g., online gradient descent) \\
       $\mathcal{A}_2$ & With a convex set $\mathcal{K}_2$, $\forall i$, $\mathcal{A}_2$ initializes $\hat{c}^{(i)}_1\in\mathcal{K}_2$ and returns $\hat{c}^{(i)}_{t+1}\in\mathcal{K}_2,\forall t$. $\mathcal{A}_2$ has sublinear regret, i.e., the total regret of $\mathcal{A}_2$ is $N\cdot o(T)$ (e.g., online gradient descent) \\
       $\ObserveEnv$ & False \\
    \hline
    \end{tabular}
    \end{small}
    \tablespace
    \caption{OMAC with convex loss}
    \tableshrink
    \label{tab:OMAC_convex}
\end{table}

As shown in Table \ref{tab:OMAC_convex}, at the end of \step{$i,t$} we fix $\hat{\Theta}=\hat{\Theta}^{(i)}$ and update $\hat{c}^{(i)}_{t+1}\in\mathcal{K}_2$ using $\mathcal{A}_2(\hat{c}_t^{(i)},g_{1:t}^{(i)})$, which is an OCO problem with linear costs $g_{1:t}^{(i)}$. On the other hand, at the end of outer iteration $i$, we update $\hat{\Theta}^{(i+1)}\in\mathcal{K}_1$ using $\mathcal{A}_1(\hat{\Theta}^{(i)},G^{(1:i)})$, which is another OCO problem with linear costs $G^{(1:i)}$. From \cite{agarwal2021regret}, we have the following regret relationship:

\begin{lemma}[Nested OCO regret bound, \cite{agarwal2021regret}]\label{lemma:nested-oco}
OMAC (Algorithm~\ref{alg:meta-algorithm})
specified by Table \ref{tab:OMAC_convex} has regret:
\begin{equation}
\begin{aligned}
     &\sum_{i=1}^N\sum_{t=1}^T\ell_t^{(i)}(\hat{\Theta}^{(i)},\hat{c}_t^{(i)}) - \min_{\Theta\in\mathcal{K}_1}\sum_{i=1}^N\min_{c^{(i)}\in\mathcal{K}_2}\sum_{t=1}^T\ell_t^{(i)}(\Theta,c^{(i)}) \\
    & \ \ \ \ \ \leq  \underbrace{\sum_{i=1}^NG^{(i)}(\hat{\Theta}^{(i)}) - \min_{\Theta\in\mathcal{K}_1}\sum_{i=1}^NG^{(i)}(\Theta)}_{\text{the total regret of } \mathcal{A}_1, T\cdot o(N)} + \underbrace{\sum_{i=1}^N\sum_{t=1}^T g_t^{(i)}(\hat{c}^{(i)}_t) - \sum_{i=1}^N\min_{c^{(i)}\in\mathcal{K}_2}\sum_{t=1}^T g_t^{(i)}(c^{(i)})}_{\text{the total regret of } \mathcal{A}_2, N\cdot o(T)}.
\end{aligned}
\end{equation}
\end{lemma}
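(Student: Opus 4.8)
The plan is to reduce the joint nested regret to the two surrogate regrets by a single use of the first-order convexity inequality, followed by careful bookkeeping of the nested minimization. Since $\ell_t^{(i)}$ is jointly convex in $(\hat{\Theta},\hat{c})$, for every $\Theta\in\mathcal{K}_1$ and every $c\in\mathcal{K}_2$ the subgradient inequality at the realized iterate $(\hat{\Theta}^{(i)},\hat{c}_t^{(i)})$ gives
\[
\ell_t^{(i)}(\hat{\Theta}^{(i)},\hat{c}_t^{(i)}) - \ell_t^{(i)}(\Theta,c) \;\leq\; \nabla_{\hat{\Theta}}\ell_t^{(i)}(\hat{\Theta}^{(i)},\hat{c}_t^{(i)})\cdot(\hat{\Theta}^{(i)}-\Theta) \;+\; \nabla_{\hat{c}}\ell_t^{(i)}(\hat{\Theta}^{(i)},\hat{c}_t^{(i)})\cdot(\hat{c}_t^{(i)}-c).
\]
By the definitions in Table~\ref{tab:OMAC_convex}, $G^{(i)}$ and $g_t^{(i)}$ are \emph{linear}, so summing the first term over $t$ yields exactly $G^{(i)}(\hat{\Theta}^{(i)})-G^{(i)}(\Theta)$, and the second term equals $g_t^{(i)}(\hat{c}_t^{(i)})-g_t^{(i)}(c)$. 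This is the whole content of the linearization; everything else is combinatorics of the comparator.

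Next I would handle the nested comparator. Fix an arbitrary $\Theta\in\mathcal{K}_1$ and, for each environment $i$, let $c_\star^{(i)}\in\arg\min_{c\in\mathcal{K}_2}\sum_{t=1}^T\ell_t^{(i)}(\Theta,c)$. Applying the inequality above at $(\Theta,c_\star^{(i)})$ and summing over $t$ and then $i$ gives
\[
\sum_{i=1}^N\sum_{t=1}^T\ell_t^{(i)}(\hat{\Theta}^{(i)},\hat{c}_t^{(i)}) - \sum_{i=1}^N\min_{c\in\mathcal{K}_2}\sum_{t=1}^T\ell_t^{(i)}(\Theta,c) \;\leq\; \sum_{i=1}^N\bigl(G^{(i)}(\hat{\Theta}^{(i)})-G^{(i)}(\Theta)\bigr) + \sum_{i=1}^N\sum_{t=1}^T\bigl(g_t^{(i)}(\hat{c}_t^{(i)})-g_t^{(i)}(c_\star^{(i)})\bigr).
\]
For the last sum, since $c_\star^{(i)}\in\mathcal{K}_2$ we have $\sum_t g_t^{(i)}(c_\star^{(i)})\geq\min_{c\in\mathcal{K}_2}\sum_t g_t^{(i)}(c)$ environment by environment, turning that term into exactly the total regret of $\mathcal{A}_2$. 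For the $G$ sum, the inequality holds for every $\Theta$, so I specialize to $\Theta\in\arg\min_{\Theta\in\mathcal{K}_1}\sum_i\min_{c}\sum_t\ell_t^{(i)}(\Theta,c)$; the left side then becomes the lemma's regret, and $\sum_i G^{(i)}(\Theta)\geq\min_{\Theta\in\mathcal{K}_1}\sum_i G^{(i)}(\Theta)$ turns the $G$ sum into the total regret of $\mathcal{A}_1$. Combining the two bounds gives the claimed inequality.

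The only genuinely delicate point is the order of the minimizations, so I would state it explicitly: the comparator $\min_\Theta\sum_i\min_{c}\sum_t\ell_t^{(i)}$ lets each environment choose a $c^{(i)}$ tailored to the shared $\Theta$, whereas the regret of $\mathcal{A}_2$ is measured against the per-environment minimizer of the \emph{linear surrogate} $g_t^{(i)}$, not of $\ell_t^{(i)}$ itself. The two are reconciled because replacing $c_\star^{(i)}$ by the surrogate minimizer only decreases $\sum_t g_t^{(i)}$, i.e., the slack goes in the favorable direction; the identical monotonicity step applies one level up for $\mathcal{A}_1$ and $\Theta$. A secondary point worth noting is that $G^{(i)}$ is constructed only after the inner loop of environment $i$ terminates, so $\mathcal{A}_1$ legitimately observes the realized costs $G^{(1:i)}$ and its assumed $T\cdot o(N)$ regret applies; similarly $g_t^{(i)}$ is available at \step{$i,t$} to $\mathcal{A}_2$. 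Once this reduction is in place, the two remaining terms are precisely the sublinear regrets guaranteed of $\mathcal{A}_1$ and $\mathcal{A}_2$ in Table~\ref{tab:OMAC_convex}, so no further estimation is needed; this is exactly the nested OCO argument of \cite{agarwal2021regret}.
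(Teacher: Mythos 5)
Your proof is correct and follows essentially the same route as the paper's: a single application of the joint-convexity (first-order) inequality at the realized iterates $(\hat{\Theta}^{(i)},\hat{c}_t^{(i)})$, identification of the linearized terms with the surrogate costs $G^{(i)}$ and $g_t^{(i)}$, and then passing to the minimizing comparators. The only difference is that you spell out the comparator bookkeeping (choosing $c_\star^{(i)}$ as the $\ell$-minimizer and then using monotonicity to pass to the $g$- and $G$-minimizers) which the paper handles implicitly by stating the bound for arbitrary $\bar{\Theta},\bar{c}^{(1:N)}$; this is a presentational, not substantive, difference.
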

Note that the total regret of $\mathcal{A}_1$ is $T\cdot o(N)$ because $G^{(i)}$ is scaled up by a factor of $T$. With Lemmas~\ref{thm:ISS} and~\ref{lemma:nested-oco}, we have the following guarantee for the average control error.
\begin{theorem}[OMAC $\mathsf{ACE}$ bound with convex loss]
\label{thm:convex-general}
Assume the unknown dynamics model is $f(x,c)=\fmodel(\phi(x;\Theta),c)$. Assume the true parameters $\Theta\in\mathcal{K}_1$ and $c^{(i)}\in\mathcal{K}_2,\forall i$. Then OMAC (Algorithm~\ref{alg:meta-algorithm}) specified by Table \ref{tab:OMAC_convex} ensures the following $\mathsf{ACE}$ guarantee: 
\begin{equation*}
    \mathsf{ACE} \leq \frac{\gamma}{1-\rho}\sqrt{W^2+\frac{o(T)}{T}+\frac{o(N)}{N}}.
\end{equation*}
\end{theorem}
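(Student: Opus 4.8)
The plan is to chain together the two lemmas already established: Lemma~\ref{thm:ISS} reduces bounding $\mathsf{ACE}$ to bounding the average squared quantity $\frac{1}{TN}\sum_{i,t}\|B_t^{(i)}u_t^{(i)} - f_t^{(i)} + w_t^{(i)}\|^2$, and Lemma~\ref{lemma:nested-oco} controls the accumulated loss $\sum_{i,t}\ell_t^{(i)}(\hat\Theta^{(i)},\hat c_t^{(i)})$ in terms of the best comparator plus the two sublinear regret terms. So the first step is to make the connection between these two quantities explicit. Because the policy is certainty-equivalence, $u_t^{(i)} = B_t^{(i)\dagger}\hat f_t^{(i)}$ with $\hat f_t^{(i)} = \fmodel(\phi(x_t^{(i)};\hat\Theta^{(i)}),\hat c_t^{(i)})$, and under the full-actuation assumption (Assumption~\ref{assump:ISS}) $B_t^{(i)}B_t^{(i)\dagger} = I_n$, so $B_t^{(i)}u_t^{(i)} = \hat f_t^{(i)}$. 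Hence
\begin{equation*}
\|B_t^{(i)}u_t^{(i)} - f_t^{(i)} + w_t^{(i)}\| = \|\hat f_t^{(i)} - f_t^{(i)} + w_t^{(i)}\| = \|\hat f_t^{(i)} - y_t^{(i)}\|,
\end{equation*}
using the definition $y_t^{(i)} = f_t^{(i)} - w_t^{(i)}$. Therefore $\|B_t^{(i)}u_t^{(i)} - f_t^{(i)} + w_t^{(i)}\|^2 = \ell_t^{(i)}(\hat\Theta^{(i)},\hat c_t^{(i)})$ exactly, which is the crucial identification.

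The second step is to bound $\min_{\Theta\in\mathcal{K}_1}\sum_{i=1}^N \min_{c^{(i)}\in\mathcal{K}_2}\sum_{t=1}^T \ell_t^{(i)}(\Theta,c^{(i)})$, the comparator appearing in Lemma~\ref{lemma:nested-oco}. Here I use the realizability hypothesis of the theorem: $f(x,c) = \fmodel(\phi(x;\Theta),c)$ with the true $\Theta\in\mathcal{K}_1$ and $c^{(i)}\in\mathcal{K}_2$. Plugging these true parameters into $\ell$ gives $\ell_t^{(i)}(\Theta, c^{(i)}) = \|\fmodel(\phi(x_t^{(i)};\Theta),c^{(i)}) - y_t^{(i)}\|^2 = \|f_t^{(i)} - y_t^{(i)}\|^2 = \|w_t^{(i)}\|^2 \le W^2$ by Assumption~\ref{assump:ISS}. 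Since $(\Theta, c^{(1)},\dots,c^{(N)})$ is one feasible choice in the nested minimization, the comparator is at most $\sum_{i,t}\|w_t^{(i)}\|^2 \le TN\,W^2$. Combining with Lemma~\ref{lemma:nested-oco}, and writing the $\mathcal{A}_1$ regret as $T\cdot o(N)$ and the $\mathcal{A}_2$ regret as $N\cdot o(T)$ as specified in Table~\ref{tab:OMAC_convex},
\begin{equation*}
\sum_{i=1}^N\sum_{t=1}^T \ell_t^{(i)}(\hat\Theta^{(i)},\hat c_t^{(i)}) \le TN\,W^2 + T\cdot o(N) + N\cdot o(T).
\end{equation*}

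The third step is simply to divide by $TN$ and feed the result into Lemma~\ref{thm:ISS}:
\begin{equation*}
\mathsf{ACE} \le \frac{\gamma}{1-\rho}\sqrt{\frac{1}{TN}\sum_{i,t}\ell_t^{(i)}(\hat\Theta^{(i)},\hat c_t^{(i)})} \le \frac{\gamma}{1-\rho}\sqrt{W^2 + \frac{o(N)}{N} + \frac{o(T)}{T}},
\end{equation*}
which is the claimed bound. I should also note that Lemma~\ref{thm:ISS} requires $x_1^{(i)} = 0$, which is assumed throughout, and that I am implicitly using that $\ell_t^{(i)}(\hat\Theta,\hat c)$ is the jointly convex loss so that Lemma~\ref{lemma:nested-oco} applies — this is exactly the standing assumption of the convex case plus the linearization choices of $g_t^{(i)}$ and $G^{(i)}$ in Table~\ref{tab:OMAC_convex}, by which the regret of the linearized OCO problems upper-bounds the regret on $\ell$ itself.

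I do not expect a serious obstacle here: the theorem is essentially a bookkeeping composition of two already-proved lemmas, and the only mild subtlety is the algebraic identity $\|B_t^{(i)}u_t^{(i)} - f_t^{(i)} + w_t^{(i)}\|^2 = \ell_t^{(i)}(\hat\Theta^{(i)},\hat c_t^{(i)})$, which hinges on $B_t^{(i)}B_t^{(i)\dagger} = I$ (full actuation) and on the definition of $y_t^{(i)}$. The one place to be careful is to make sure the comparator in Lemma~\ref{lemma:nested-oco} is the nested $\min_\Theta \sum_i \min_{c^{(i)}}$ rather than a joint minimum — but since realizability supplies a single feasible tuple achieving loss $\le TN W^2$, any comparator no larger than that tuple's loss still yields the bound, so this causes no difficulty.
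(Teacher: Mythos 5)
Your proposal is correct and follows essentially the same route as the paper's proof: use realizability to make the true $(\Theta, c^{(1:N)})$ a feasible comparator with per-step loss $\|w_t^{(i)}\|^2 \le W^2$, apply Lemma~\ref{lemma:nested-oco} to bound the cumulative played loss, identify $\ell_t^{(i)}(\hat\Theta^{(i)},\hat c_t^{(i)}) = \|B_t^{(i)}u_t^{(i)} - f_t^{(i)} + w_t^{(i)}\|^2$ via the certainty-equivalence policy and full actuation, and then feed this into Lemma~\ref{thm:ISS}. Your explicit remarks on $B_t^{(i)}B_t^{(i)\dagger}=I$ and on the nested-versus-joint comparator are fine elaborations of steps the paper leaves implicit.
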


To further understand Theorem \ref{thm:convex-general} and compare OMAC with conventional adaptive control approaches, we provide the following corollary using the model in Example \ref{example:convex}.
\begin{corollary}
\label{thm:convex}
Suppose the unknown dynamics model is $f(x,c)=Y_1(x)\Theta+Y_2(x)c$, where $Y_1:\B{R}^n\rightarrow\B{R}^{n\times p}, Y_2:\B{R}^n\rightarrow\B{R}^{n\times h}$ are known bases. We assume $\|\Theta\|\leq K_\Theta$ and $\|c^{(i)}\|\leq K_c,\forall i$. Moreover, we assume $\|Y_1(x)\| \leq K_{1}$ and $\|Y_2(x)\| \leq K_{2}, \forall x$. 
In Table \ref{tab:OMAC_convex} we use $\hat{f}_t^{(i)}=Y_1(x_t^{(i)})\hat{\Theta}^{(i)}+Y_2(x_t^{(i)})\hat{c}_t^{(i)}$, and Online Gradient Descent (OGD) \cite{hazan2019introduction} for both $\mathcal{A}_1$ and $\mathcal{A}_2$, with learning rates $\bar{\eta}^{(i)}$ and $\eta^{(i)}_t$ respectively. We set $\mathcal{K}_1=\{\hat{\Theta}:\|\hat{\Theta}\|\leq K_{\Theta}\}$ and $\mathcal{K}_2=\{\hat{c}:\|\hat{c}\|\leq K_c\}$. If we schedule the learning rates as:
\begin{equation*}
\bar{\eta}^{(i)} = \frac{2K_\Theta}{(\underbrace{4K_1^2K_\Theta+4K_1K_2K_c+2K_1W}_{C_1})T\sqrt{i}}, \,
\eta_t^{(i)} = 
\frac{2K_c}{(\underbrace{4K_2^2K_c+4K_1K_2K_\Theta+2K_2W}_{C_2})\sqrt{t}},
\end{equation*}
then the average control performance is bounded as:
\begin{equation*}
\mathsf{ACE}(\mathrm{OMAC}) \leq
\frac{\gamma}{1-\rho}\sqrt{W^2 + 3\left( K_\Theta C_1\frac{1}{\sqrt{N}} + K_c C_2\frac{1}{\sqrt{T}} \right)}.
\end{equation*}
Moreover, the baseline adaptive control which uses OGD to adapt $\hat{\Theta}$ and $\hat{c}$ at each time step satisfies:
\begin{equation*}
\mathsf{ACE}(\mathrm{baseline\;adaptive\;control}) \leq
\frac{\gamma}{1-\rho}\sqrt{W^2 + 3\sqrt{K_\Theta^2+K_c^2}\sqrt{C_1^2+C_2^2}\frac{1}{\sqrt{T}}}.
\end{equation*}
\end{corollary}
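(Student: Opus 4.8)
\emph{Proof proposal.} The plan is to chain together the three ingredients already in hand --- the e-ISS control bound of Lemma~\ref{thm:ISS}, the nested-OCO regret decomposition of Lemma~\ref{lemma:nested-oco}, and the textbook $\tfrac{3}{2}GD\sqrt{S}$ regret of online gradient descent --- and then chase constants. First, since the system is fully actuated and the policy is certainty equivalence, $B_t^{(i)}u_t^{(i)} = B_t^{(i)}B_t^{(i)\dagger}\hat f_t^{(i)} = \hat f_t^{(i)}$, so $B_t^{(i)}u_t^{(i)} - f_t^{(i)} + w_t^{(i)} = \hat f_t^{(i)} - y_t^{(i)}$ and $\|B_t^{(i)}u_t^{(i)} - f_t^{(i)} + w_t^{(i)}\|^2 = \ell_t^{(i)}(\hat\Theta^{(i)}, \hat c_t^{(i)})$; Lemma~\ref{thm:ISS} then gives $\mathsf{ACE} \le \frac{\gamma}{1-\rho}\sqrt{\frac{1}{TN}\sum_{i,t}\ell_t^{(i)}(\hat\Theta^{(i)}, \hat c_t^{(i)})}$. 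Next, Lemma~\ref{lemma:nested-oco} bounds $\sum_{i,t}\ell_t^{(i)}(\hat\Theta^{(i)},\hat c_t^{(i)})$ by the best comparator plus the regrets of $\mathcal A_1$ and $\mathcal A_2$; since $f(x,c) = Y_1(x)\Theta + Y_2(x)c$ with the true $\Theta \in \mathcal K_1$, $c^{(i)} \in \mathcal K_2$, evaluating the comparator at the true parameters gives $\ell_t^{(i)} = \|f_t^{(i)} - y_t^{(i)}\|^2 = \|w_t^{(i)}\|^2 \le W^2$, so the comparator term is at most $TNW^2$.

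It then remains to bound the two regret terms. For the quadratic loss, $\nabla_{\hat\Theta}\ell_t^{(i)} = 2Y_1^\top(\hat f_t^{(i)} - y_t^{(i)})$ and $\nabla_{\hat c}\ell_t^{(i)} = 2Y_2^\top(\hat f_t^{(i)} - y_t^{(i)})$; using $\|\hat f_t^{(i)}\| \le K_1 K_\Theta + K_2 K_c$ and $\|y_t^{(i)}\| = \|Y_1\Theta + Y_2 c^{(i)} - w_t^{(i)}\| \le K_1 K_\Theta + K_2 K_c + W$, the triangle inequality yields $\|\hat f_t^{(i)} - y_t^{(i)}\| \le 2K_1 K_\Theta + 2K_2 K_c + W$, hence $\|\nabla_{\hat\Theta}\ell_t^{(i)}\| \le C_1$ and $\|\nabla_{\hat c}\ell_t^{(i)}\| \le C_2$. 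Thus the linear inner costs $g_t^{(i)}$ have gradient norm $\le C_2$ on the set $\mathcal K_2$ of diameter $2K_c$, and the linear outer costs $G^{(i)} = \sum_{t=1}^T \nabla_{\hat\Theta}\ell_t^{(i)}\cdot\hat\Theta$ have gradient norm $\le TC_1$ on $\mathcal K_1$ of diameter $2K_\Theta$. The stated learning rates are exactly the schedule $\eta_s = D/(G\sqrt s)$ for which OGD attains regret $\le \tfrac{3}{2} GD\sqrt S$, so within each environment $\mathcal A_2$ incurs regret $\le 3 K_c C_2 \sqrt T$ (total $3N K_c C_2\sqrt T$), and over the $N$ environments $\mathcal A_1$ incurs regret $\le 3 T K_\Theta C_1 \sqrt N$. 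Dividing the bound from the first paragraph by $TN$ gives $\frac1{TN}\sum_{i,t}\ell_t^{(i)}(\hat\Theta^{(i)},\hat c_t^{(i)}) \le W^2 + \tfrac{3K_\Theta C_1}{\sqrt N} + \tfrac{3K_c C_2}{\sqrt T}$, and substituting back into the e-ISS bound gives the claimed $\mathsf{ACE}(\mathrm{OMAC})$ inequality.

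For the baseline, the same reduction applies with the concatenated parameter $\xi = (\hat\Theta, \hat c)$ updated at every time step: OGD is run on $\mathcal K_1 \times \mathcal K_2$ (diameter $2\sqrt{K_\Theta^2 + K_c^2}$) against the per-environment optimizer, with stacked gradient $2[\,Y_1^\top;\,Y_2^\top\,](\hat f_t^{(i)} - y_t^{(i)})$ of norm $\le \sqrt{C_1^2 + C_2^2}$ --- here the identity $C_1^2 + C_2^2 = 4(K_1^2 + K_2^2)(2K_1 K_\Theta + 2K_2 K_c + W)^2$ is what keeps the constant clean. The per-environment regret is then $\le 3\sqrt{K_\Theta^2 + K_c^2}\sqrt{C_1^2 + C_2^2}\sqrt T$, total $3N\sqrt T\,\sqrt{K_\Theta^2 + K_c^2}\sqrt{C_1^2 + C_2^2}$; dividing by $TN$ and reusing the e-ISS bound gives the baseline inequality. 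The point of the comparison is that in OMAC the $K_\Theta C_1$ contribution decays like $1/\sqrt N$ (the shared representation is amortized across environments), whereas in the baseline it only decays like $1/\sqrt T$.

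I expect the only real work to be the constant-tracking in the middle: pinning down $\|\hat f_t^{(i)} - y_t^{(i)}\| \le 2K_1 K_\Theta + 2K_2 K_c + W$, which propagates into $C_1, C_2$; checking that the two learning-rate schedules in the statement are verbatim the $D/(G\sqrt{\cdot})$ schedules --- note the extra factor $T$ in the outer-cost gradient bound, which is precisely why $\mathcal A_1$'s regret is $T\cdot o(N)$; and the algebraic identity for $\sqrt{C_1^2 + C_2^2}$ used in the baseline. Everything else is a direct invocation of Lemmas~\ref{thm:ISS} and~\ref{lemma:nested-oco} together with the assumption $x_1^{(i)}=0$.
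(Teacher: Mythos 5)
Your proposal matches the paper's own proof: the same chain of Lemma~\ref{thm:ISS}, the nested-OCO decomposition of Lemma~\ref{lemma:nested-oco}, and the $\tfrac{3}{2}GD\sqrt{S}$ OGD regret with exactly the paper's gradient bounds ($\|\hat f_t^{(i)}-y_t^{(i)}\|\le 2K_1K_\Theta+2K_2K_c+W$, giving $C_1$, $TC_1$, $C_2$), diameters $2K_\Theta$, $2K_c$, and the concatenated-parameter treatment of the baseline with $\sqrt{C_1^2+C_2^2}$. The constant-tracking and final bounds are all correct, so there is nothing to add.
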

Note that $\mathsf{ACE}(\mathrm{baseline\;adaptive\;control})$ does not improve as $N$ increases (i.e., encountering more environments has no benefit). If $p\gg h$, we potentially have $K_1\gg K_2$ and $K_\Theta\gg K_c$, so $C_1\gg C_2$. Therefore, the $\mathsf{ACE}$ upper bound of OMAC is better than the baseline adaptation if $N>T$, which is consistent with recent representation learning results \citep{du2020few,tripuraneni2020provable}. It is also worth noting that the baseline adaptation is much more computationally expensive, because it needs to adapt both $\hat{\Theta}$ and $\hat{c}$ at each time step. Intuitively, OMAC improves convergence because the meta-adapter $\mathcal{A}_1$ approximates the environment-invariant part $Y_1(x)\Theta$, which makes the inner-adapter $\mathcal{A}_2$ much more efficient.

\subsection{Element-wise convex case}
\label{sec:case-2}
In this subsection, we consider the setting where the loss function $\ell_t^{(i)}(\hat{\Theta},\hat{c})$ is element-wise convex with respect to $\hat{\Theta}$ and $\hat{c}$, i.e., when one of $\hat{\Theta}$ or $\hat{c}$ is fixed, $\ell_t^{(i)}$ is convex with respect to the other one. For instance, $\fmodel$ could be  function  as depicted in Example \ref{example:element-wise}. Outside the context of control, such bilinear models are also studied in representation learning \cite{du2020few,tripuraneni2020provable} and factorization bandits \cite{wang2017factorization,kawale2015efficient}. 

\begin{example}\label{example:element-wise}
We consider a model $\fmodel(\phi(x;\hat{\Theta}),\hat{c})=Y(x)\hat{\Theta}\hat{c}$ to estimate $f$:
\begin{equation}
    \hat{f}_t^{(i)} = Y(x_t^{(i)})\hat{\Theta}^{(i)}\hat{c}_t^{(i)},
\end{equation}
where $Y:\B{R}^n\rightarrow\B{R}^{n\times\bar{p}}$ is a known basis, $\hat{\Theta}^{(i)}\in\B{R}^{\bar{p}\times h}$, and $\hat{c}^{(i)}_t\in\B{R}^h$. Note that the dimensionality of $\hat{\Theta}$ is $p=\bar{p}h$. Conventional adaptive control typically views $\hat{\Theta}\hat{c}$ as a vector in $\B{R}^{\bar{p}}$ and adapts it at each time step regardless of the environment changes \citep{boffi2020regret}.
\end{example}

Compared to Section \ref{sec:case-1}, the element-wise convex case poses new challenges: i) the coupling between $\hat{\Theta}$ and $\hat{c}$ brings inherent non-identifiability issues; and ii) statistical learning guarantees typical need i.i.d. assumptions on $c^{(i)}$ and $x_t^{(i)}$ \citep{du2020few,tripuraneni2020provable}. These challenges are further amplified by the underlying unknown nonlinear system \eqref{eq:dynamics}. Therefore in this section we set $\ObserveEnv=\mathrm{True}$, i.e., the controller has access to the true environmental condition $c^{(i)}$ at the end of the $i^\mathrm{th}$ outer iteration, which is practical in many systems when $c^{(i)}$ has a concrete physical meaning, e.g., drones with wind disturbances \citep{o2021meta,richards2021adaptive}.

\renewcommand{\arraystretch}{1.25}
\begin{table}[h]
\begin{small}
    \centering
    \begin{tabular}{|c|p{.8\textwidth}|}
    \hline
       $\fmodel(\phi(x;\hat{\Theta}),\hat{c})$ & Any $\fmodel$ model such that $\ell_t^{(i)}(\hat{\Theta},\hat{c})$ is element-wise convex (e.g., Example \ref{example:element-wise}) \\
       $g_t^{(i)}(\hat{c})$ & $\ell_t^{(i)}(\hat{\Theta}^{(i)},\hat{c})$ \\
       $G^{(i)}(\hat{\Theta})$ & $\sum_{t=1}^T\ell_t^{(i)}(\hat{\Theta},c^{(i)})$ \\
       $\mathcal{A}_1$ \& $\mathcal{A}_2$ & Same as Table \ref{tab:OMAC_convex} \\
       $\ObserveEnv$ & True \\
    \hline
    \end{tabular}
    \end{small}
    \tablespace
    \caption{OMAC with element-wise convex loss}
    \tableshrink
    \label{tab:OMAC_element_convex}
\end{table}

The inputs to OMAC for the element-wise convex case are specified in Table \ref{tab:OMAC_element_convex}. Compared to the convex case in Table \ref{tab:OMAC_convex}, difference includes i) the cost functions $g_t^{(i)}$ and $G^{(i)}$ are convex, not necessarily linear; and ii) since $\ObserveEnv=\mathrm{True}$, in $G^{(i)}$ we use the true environmental condition $c^{(i)}$. With inputs specified in Table \ref{tab:OMAC_element_convex}, Algorithm \ref{alg:meta-algorithm} has $\mathsf{ACE}$ guarantees in the following theorem.

\begin{theorem}[OMAC $\mathsf{ACE}$ bound with element-wise convex loss]
\label{thm:nested-oco-regret-element}
Assume the unknown dynamics model is $f(x,c)=\fmodel(\phi(x;\Theta),c)$. Assume the true parameter $\Theta\in\mathcal{K}_1$ and $c^{(i)}\in\mathcal{K}_2,\forall i$. Then OMAC (Algorithm~\ref{alg:meta-algorithm}) specified by Table \ref{tab:OMAC_element_convex} ensures the following $\mathsf{ACE}$ guarantee: 
\begin{equation*}
    \mathsf{ACE} \leq \frac{\gamma}{1-\rho}\sqrt{W^2+\frac{o(T)}{T}+\frac{o(N)}{N}}.
\end{equation*}
\end{theorem}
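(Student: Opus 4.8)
The plan is to follow exactly the template of Theorem~\ref{thm:convex-general}: reduce the $\mathsf{ACE}$ bound to controlling the average observed loss via Lemma~\ref{thm:ISS}, then bound that average by $W^2$ plus a cumulative regret that splits cleanly into an $\mathcal{A}_2$-part ($N\cdot o(T)$) and an $\mathcal{A}_1$-part ($T\cdot o(N)$). The reduction is immediate: since the system is fully actuated and the policy is certainty-equivalence, $B_t^{(i)}u_t^{(i)} = \hat{f}_t^{(i)}$, so $B_t^{(i)}u_t^{(i)} - f_t^{(i)} + w_t^{(i)} = \hat{f}_t^{(i)} - y_t^{(i)}$ and hence $\|B_t^{(i)}u_t^{(i)} - f_t^{(i)} + w_t^{(i)}\|^2 = \ell_t^{(i)}(\hat{\Theta}^{(i)},\hat{c}_t^{(i)})$. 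Lemma~\ref{thm:ISS} then gives $\mathsf{ACE} \leq \frac{\gamma}{1-\rho}\sqrt{\frac{1}{TN}\sum_{i,t}\ell_t^{(i)}(\hat{\Theta}^{(i)},\hat{c}_t^{(i)})}$. Because the true dynamics satisfy $f(x,c)=\fmodel(\phi(x;\Theta),c)$, evaluating the loss at the true parameters gives $\ell_t^{(i)}(\Theta,c^{(i)}) = \|f_t^{(i)}-y_t^{(i)}\|^2 = \|w_t^{(i)}\|^2 \leq W^2$, so it remains to show the cumulative regret $\sum_{i,t}\ell_t^{(i)}(\hat{\Theta}^{(i)},\hat{c}_t^{(i)}) - \sum_{i,t}\ell_t^{(i)}(\Theta,c^{(i)})$ is $N\cdot o(T) + T\cdot o(N)$.

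Unlike the jointly convex case, $\ell_t^{(i)}$ need not be jointly convex, so we cannot invoke Lemma~\ref{lemma:nested-oco} directly; this is the crux. The resolution exploits two structural facts encoded in Table~\ref{tab:OMAC_element_convex}. First, within an outer iteration the representation $\hat{\Theta}^{(i)}$ is held fixed, so $g_t^{(i)}(\hat{c}) = \ell_t^{(i)}(\hat{\Theta}^{(i)},\hat{c})$ is genuinely convex in $\hat{c}$, and the inner regret bound of $\mathcal{A}_2$ applies: $\sum_t \ell_t^{(i)}(\hat{\Theta}^{(i)},\hat{c}_t^{(i)}) - \min_{c\in\mathcal{K}_2}\sum_t \ell_t^{(i)}(\hat{\Theta}^{(i)},c) \leq o(T)$ for every $i$. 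Second, because $\ObserveEnv = \mathrm{True}$, the outer cost $G^{(i)}(\hat{\Theta}) = \sum_t \ell_t^{(i)}(\hat{\Theta},c^{(i)})$ uses the true environment parameter, hence is convex in $\hat{\Theta}$ by element-wise convexity, and $\mathcal{A}_1$ yields $\sum_i G^{(i)}(\hat{\Theta}^{(i)}) - \min_{\Theta'\in\mathcal{K}_1}\sum_i G^{(i)}(\Theta') \leq T\cdot o(N)$.

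I would then chain these bounds by a single add-and-subtract step:
\begin{align*}
\sum_{i,t}\ell_t^{(i)}(\hat{\Theta}^{(i)},\hat{c}_t^{(i)}) - \sum_{i,t}\ell_t^{(i)}(\Theta,c^{(i)}) &= \Big(\sum_{i,t}\ell_t^{(i)}(\hat{\Theta}^{(i)},\hat{c}_t^{(i)}) - \sum_i \min_{c\in\mathcal{K}_2}\sum_t\ell_t^{(i)}(\hat{\Theta}^{(i)},c)\Big) \\
&\quad + \Big(\sum_i \min_{c\in\mathcal{K}_2}\sum_t\ell_t^{(i)}(\hat{\Theta}^{(i)},c) - \sum_{i,t}\ell_t^{(i)}(\Theta,c^{(i)})\Big).
\end{align*}
The first parenthesis is exactly the sum over $i$ of the inner-loop regrets of $\mathcal{A}_2$, hence $N\cdot o(T)$. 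For the second, since the true $c^{(i)}\in\mathcal{K}_2$ we have $\min_{c\in\mathcal{K}_2}\sum_t\ell_t^{(i)}(\hat{\Theta}^{(i)},c) \leq \sum_t\ell_t^{(i)}(\hat{\Theta}^{(i)},c^{(i)}) = G^{(i)}(\hat{\Theta}^{(i)})$, and since $\Theta\in\mathcal{K}_1$, $\sum_{i,t}\ell_t^{(i)}(\Theta,c^{(i)}) = \sum_i G^{(i)}(\Theta) \geq \min_{\Theta'\in\mathcal{K}_1}\sum_i G^{(i)}(\Theta')$; so the second parenthesis is at most the outer-loop regret of $\mathcal{A}_1$, i.e. $T\cdot o(N)$. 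Combining and dividing by $TN$ gives $\frac{1}{TN}\sum_{i,t}\ell_t^{(i)}(\hat{\Theta}^{(i)},\hat{c}_t^{(i)}) \leq W^2 + o(T)/T + o(N)/N$, and substituting into the Lemma~\ref{thm:ISS} bound yields the claim. The step I expect to be the main obstacle — and the one to state carefully — is the second parenthesis: it is essential that the outer cost $G^{(i)}$ be defined with the observed $c^{(i)}$ rather than the online iterates $\hat{c}_t^{(i)}$, since that is what makes $G^{(i)}$ convex and lets $\min_{c\in\mathcal{K}_2}$ be upper bounded by the true-parameter value for free; this is precisely why this instantiation requires $\ObserveEnv=\mathrm{True}$, and it is what sidesteps the non-identifiability and i.i.d.\ issues that the bilinear coupling would otherwise raise.
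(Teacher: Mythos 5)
Your proposal is correct and follows essentially the same argument as the paper: reduce $\mathsf{ACE}$ to the cumulative observed loss via Lemma~\ref{thm:ISS}, note $\ell_t^{(i)}(\Theta,c^{(i)})=\|w_t^{(i)}\|^2\leq W^2$, and split the excess loss into the inner regret of $\mathcal{A}_2$ ($N\cdot o(T)$) and the outer regret of $\mathcal{A}_1$ ($T\cdot o(N)$), exploiting that $\ObserveEnv=\mathrm{True}$ makes $G^{(i)}$ a convex function of $\hat{\Theta}$ evaluated at the true $c^{(i)}$. The only cosmetic difference is that the paper pivots through $\ell_t^{(i)}(\hat{\Theta}^{(i)},c^{(i)})$ directly, while you pivot through $\min_{c\in\mathcal{K}_2}\sum_t\ell_t^{(i)}(\hat{\Theta}^{(i)},c)$; the two decompositions are interchangeable since $c^{(i)}\in\mathcal{K}_2$.
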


\subsubsection{Faster convergence with sub Gaussian and environment diversity assumptions}
\label{sec:case-3}
Since the cost functions $g_t^{(i)}$ and $G^{(i)}$ in Table \ref{tab:OMAC_element_convex} are not necessarily strongly convex, the $\mathsf{ACE}$ upper bound in Theorem \ref{thm:nested-oco-regret-element} is typically $\frac{\gamma}{1-\rho}\sqrt{W^2+O(1/\sqrt{T})+O(1/\sqrt{N})}$ (e.g., using OGD for both $\mathcal{A}_1$ and $\mathcal{A}_2$). However, for the bilinear model in Example \ref{example:element-wise}, it is possible to achieve faster convergence with extra sub Gaussian and \emph{environment diversity} assumptions.

\renewcommand{\arraystretch}{1.25}
\begin{table}[h]
\begin{small}
    \centering
    \begin{tabular}{|c|p{.8\textwidth}|}
    \hline
       $\fmodel(\phi(x;\hat{\Theta}),\hat{c})$ & The bilinear model in Example \ref{example:element-wise} \\
       $g_t^{(i)}(\hat{c})$ & $\ell_t^{(i)}(\hat{\Theta}^{(i)},\hat{c})$ \\
       $G^{(i)}(\hat{\Theta})$ & $\lambda\|\hat{\Theta}\|_F^2 + \sum_{j=1}^i\sum_{t=1}^T \ell_t^{(j)}(\hat{\Theta},c^{(j)})$ with some $\lambda>0$ \\
       $\mathcal{A}_1$ & $\hat{\Theta}^{(i+1)}=\arg\min_{\hat{\Theta}} G^{(i)}(\hat{\Theta})$ (i.e., Ridge regression) \\
       $\mathcal{A}_2$ & Same as Table \ref{tab:OMAC_convex} \\
       $\ObserveEnv$ & True \\
    \hline
    \end{tabular}
    \end{small}
    \tablespace
    \caption{OMAC with bilinear model}
    \tableshrink
    \label{tab:OMAC_biliear}
\end{table}

With a bilinear model, the inputs to the OMAC algorithm are shown in Table \ref{tab:OMAC_biliear}. 
With extra assumptions on $w_t^{(i)}$ and the environment, we have the following $\mathsf{ACE}$ guarantees.

\begin{theorem}[OMAC $\mathsf{ACE}$ bound with bilinear model]
\label{thm:bilinear}
Consider an unknown dynamics model $f(x,c)=Y(x)\Theta c$ where $Y:\B{R}^{n}\rightarrow\B{R}^{n\times\bar{p}}$ is a known basis and $\Theta\in\B{R}^{\bar{p}\times h}$. Assume each component of the disturbance $w_t^{(i)}$ is $R$-sub-Gaussian, the true parameters $\|\Theta\|_F\leq K_\Theta$, $\|c^{(i)}\|\leq K_c,\forall i$, and $\|Y(x)\|_F\leq K_Y,\forall x$. 
Define $Z_t^{(j)} = c^{(j)\top}\otimes Y(x_t^{(j)}) \in \B{R}^{n\times \bar{p}h}$ and assume environment diversity:
 $\lambda_{\min}(\sum_{j=1}^i\sum_{t=1}^T Z_t^{(j)\top}Z_t^{(j)}) = \Omega(i)$.
Then OMAC (Algorithm~\ref{alg:meta-algorithm}) specified by Table \ref{tab:OMAC_biliear} has the following $\mathsf{ACE}$ guarantee (with probability at least $1-\delta$):
\begin{equation}
    \mathsf{ACE} \leq \frac{\gamma}{1-\rho}\sqrt{W^2+\frac{o(T)}{T}+O(\frac{\log(NT/\delta)\log(N)}{N})}.
\end{equation} If we relax the environment diversity condition to $\Omega(\sqrt{i})$, the last term becomes $O(\frac{\log(NT/\delta)}{\sqrt{N}})$. 
\end{theorem}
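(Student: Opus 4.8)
The plan is to chain three reductions: from $\mathsf{ACE}$ to a cumulative prediction error via Lemma~\ref{thm:ISS}; from that error to an inner OCO regret plus a meta-level least-squares error; and from the meta error to a self-normalized confidence bound whose radius is converted into a per-environment rate by the environment-diversity condition. Concretely, since the system is fully actuated and the certainty-equivalence policy sets $u_t^{(i)}=B_t^{(i)\dagger}\hat f_t^{(i)}$, we have $B_t^{(i)}u_t^{(i)}-f_t^{(i)}+w_t^{(i)}=\hat f_t^{(i)}-y_t^{(i)}$, so Lemma~\ref{thm:ISS} gives $\mathsf{ACE}\le\frac{\gamma}{1-\rho}\sqrt{\frac1{TN}\sum_{i,t}\ell_t^{(i)}(\hat\Theta^{(i)},\hat c_t^{(i)})}$. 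Writing $\theta=\mathrm{vec}(\Theta)$ and using $Y(x)\Theta c=(c^\top\otimes Y(x))\theta$, we have $f_t^{(i)}=Z_t^{(i)}\theta$, $y_t^{(i)}=Z_t^{(i)}\theta-w_t^{(i)}$, and in particular $\ell_t^{(i)}(\Theta,c^{(i)})=\|w_t^{(i)}\|^2\le W^2$. Hence it suffices to bound the excess loss $\sum_{i,t}\ell_t^{(i)}(\hat\Theta^{(i)},\hat c_t^{(i)})-\sum_{i,t}\|w_t^{(i)}\|^2$, which I split as (i) $\sum_{i,t}\ell_t^{(i)}(\hat\Theta^{(i)},\hat c_t^{(i)})-\sum_i\min_{c\in\mathcal K_2}\sum_t\ell_t^{(i)}(\hat\Theta^{(i)},c)$ plus (ii) $\sum_i\min_{c\in\mathcal K_2}\sum_t\ell_t^{(i)}(\hat\Theta^{(i)},c)-\sum_{i,t}\|w_t^{(i)}\|^2$. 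Term (i) is exactly the cumulative regret of $\mathcal A_2$ over the $N$ episodes of length $T$ on the convex costs $g_t^{(i)}=\ell_t^{(i)}(\hat\Theta^{(i)},\cdot)$ over $\mathcal K_2$, hence $N\cdot o(T)$ (e.g.\ $O(N\sqrt T)$ with OGD), which contributes the $o(T)/T$ term; and since $c^{(i)}\in\mathcal K_2$, term (ii) $\le\sum_{i,t}\big(\ell_t^{(i)}(\hat\Theta^{(i)},c^{(i)})-\ell_t^{(i)}(\Theta,c^{(i)})\big)$.

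The heart of the argument is term (ii). Let $\hat\theta^{(i)}=\mathrm{vec}(\hat\Theta^{(i)})$, $V_i=\sum_{j\le i}\sum_tZ_t^{(j)\top}Z_t^{(j)}$, and $\bar V_i=\lambda I+V_i$; then $\mathcal A_1$ in Table~\ref{tab:OMAC_biliear} is online ridge regression, $\hat\theta^{(i)}=\bar V_{i-1}^{-1}\sum_{j<i}\sum_tZ_t^{(j)\top}y_t^{(j)}$, so $\hat\theta^{(i)}-\theta=-\bar V_{i-1}^{-1}(\lambda\theta+\sum_{j<i}\sum_tZ_t^{(j)\top}w_t^{(j)})$ and $\ell_t^{(i)}(\hat\Theta^{(i)},c^{(i)})-\ell_t^{(i)}(\Theta,c^{(i)})=\|Z_t^{(i)}(\hat\theta^{(i)}-\theta)\|^2+2w_t^{(i)\top}Z_t^{(i)}(\hat\theta^{(i)}-\theta)$. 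For the quadratic piece I would establish a vector-observation self-normalized confidence bound (the matrix-regressor analogue of the Abbasi-Yadkori et al.\ ellipsoid), which is legitimate because $Z_t^{(i)}$ is measurable before $w_t^{(i)}$ is realized and $w_t^{(i)}$ is conditionally mean-zero and $R$-sub-Gaussian; bounding $\log\det\bar V_i=O(\log(NT))$ via $\|Z_t^{(i)}\|_F\le K_cK_Y$, this gives, with probability $\ge1-\delta$ and for all $i$ simultaneously, $\|\hat\theta^{(i)}-\theta\|_{\bar V_{i-1}}^2=O(\bar p\,h\,n\log(NT/\delta)+\lambda K_\Theta^2)$. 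Plugging in the environment-diversity bound $\lambda_{\min}(V_i)=\Omega(i)$ yields $\|\hat\theta^{(i)}-\theta\|^2=O(\log(NT/\delta)/i)$ (with the $i=1$ term bounded by a constant), hence $\sum_{i,t}\|Z_t^{(i)}(\hat\theta^{(i)}-\theta)\|^2\le TK_c^2K_Y^2\sum_{i=1}^N\|\hat\theta^{(i)}-\theta\|^2=O(T\log(NT/\delta)\log N)$; dividing by $TN$ gives the claimed $O(\log(NT/\delta)\log N/N)$ term. The cross term $2\sum_{i,t}w_t^{(i)\top}Z_t^{(i)}(\hat\theta^{(i)}-\theta)$ is a martingale-difference sum with conditionally $R\|Z_t^{(i)}(\hat\theta^{(i)}-\theta)\|$-sub-Gaussian increments, so a self-normalizing Azuma/Freedman inequality bounds it by $O(\sqrt{\log(1/\delta)\sum_{i,t}\|Z_t^{(i)}(\hat\theta^{(i)}-\theta)\|^2})=O(\sqrt{T\log(NT/\delta)\log N\,\log(1/\delta)})$, which is lower order after dividing by $TN$ and is absorbed. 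Assembling (i) and (ii) back into Lemma~\ref{thm:ISS} with $\frac1{TN}\sum_{i,t}\|w_t^{(i)}\|^2\le W^2$ gives the stated $\mathsf{ACE}$ bound; replacing $\Omega(i)$ by $\Omega(\sqrt i)$ gives $\|\hat\theta^{(i)}-\theta\|^2=O(\log(NT/\delta)/\sqrt i)$ and $\sum_{i\le N}i^{-1/2}=O(\sqrt N)$, so the meta term becomes $O(\log(NT/\delta)/\sqrt N)$ with no extra $\log N$.

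The main obstacle is the self-normalized concentration step. Unlike standard representation-learning results, the regressors $Z_t^{(i)}$ are not i.i.d.: they contain the closed-loop state $x_t^{(i)}$, which depends on all past disturbances and on the adapted $\hat\Theta$ and $\hat c$, so one must run the martingale self-normalized bound with the correct filtration, carry the $n$-dimensional vector-valued observations through the determinant/volume estimate, and then couple the ellipsoid radius with the lower bound $\lambda_{\min}(V_i)=\Omega(i)$ (resp.\ $\Omega(\sqrt i)$) so that the ``cumulative'' guarantee becomes the per-environment $1/i$ (resp.\ $1/\sqrt i$) decay and hence the harmonic-sum $\log N$ (resp.\ no $\log N$) factor. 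The inner-loop OGD regret and the constant bookkeeping are routine by comparison.
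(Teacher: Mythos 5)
Your proposal is correct and follows essentially the same route as the paper's proof: reduce $\mathsf{ACE}$ via Lemma~\ref{thm:ISS}, charge the inner loop to the regret of $\mathcal{A}_2$, bound the ridge-regression meta-estimate with the self-normalized (Abbasi-Yadkori-type) confidence bound, convert the ellipsoid radius into an $O(\log(iT/\delta)/i)$ parameter error via the environment-diversity condition, and sum the harmonic series to obtain the $\log N$ factor (and $1/\sqrt{N}$ under the $\Omega(\sqrt{i})$ relaxation). The only notable difference is that you treat the cross term $2w_t^{(i)\top}Z_t^{(i)}(\hat{\theta}^{(i)}-\theta)$ explicitly with a martingale concentration argument, a detail the paper's comparator step handles only implicitly, so your version is if anything slightly more careful.
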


The sub-Gaussian assumption is widely used in statistical learning theory to obtain concentration bounds \citep{abbasi2011improved,du2020few}. The environment diversity assumption states that $c^{(i)}$ provides ``new information'' in every outer iteration such that the minimum eigenvalue of $\sum_{j=1}^i\sum_{t=1}^T Z_t^{(j)\top}Z_t^{(j)}$ grows linearly as $i$ increases. 
Note that we do not need $\lambda_{\min}(\sum_{j=1}^i\sum_{t=1}^T Z_t^{(j)\top}Z_t^{(j)})$ to increase as $T$ goes up. Moreover, if we relax the condition to $\Omega(\sqrt{i})$, the $\mathsf{ACE}$ bound becomes worse than the general element-wise convex case (the last term is $O(1/\sqrt{N})$), which implies the importance of ``linear'' environment diversity $\Omega(i)$. Task diversity has been shown to be important for representation learning \citep{du2020few,tripuraneni2020theory}. We provide a proof sketch here and the full proof can be found in the Appendix \ref{sec:proof-theorem6}.

\textbf{Proof sketch.} In the outer loop we use the martingale concentration bound \cite{abbasi2011improved} and the environment diversity assumption to bound $\|\hat{\Theta}^{(i+1)}-\Theta\|_F^2\leq O(\frac{\log(iT/\delta)}{i}),\forall i\geq 1$ with probability at least $1-\delta$. Then, we use Lemma~\ref{thm:ISS} to show how the outer loop concentration bound and the inner loop regret bound of $\mathcal{A}_2$ translate to $\mathsf{ACE}$.

\section{Deep OMAC and experiments}
\label{sec:case-4}
We now introduce deep OMAC, a deep representation learning based OMAC algorithm.  Table~\ref{tab:OMAC_deep} shows the instantiation.
As shown in Table \ref{tab:OMAC_deep}, Deep OMAC employs a DNN to represent $\phi$, and uses gradient descent for optimization. With the model\footnote{The intuition behind this structure is that, any analytic function $\bar{f}(x,\bar{c})$ can be approximated by $\phi(x)c(\bar{c})$ with a universal approximator $\phi$~\cite{trefethen2017multivariate}. We provide a detailed and theoretical justification in the Appendix \ref{sec:experiment-detail}.} $\phi(x;\hat{\Theta})\cdot\hat{c}$, the meta-adapter $\mathcal{A}_1$ updates the representation $\phi$ via deep learning, and the inner-adapter $\mathcal{A}_2$ updates a linear layer $\hat{c}$.

\renewcommand{\arraystretch}{1.}
\begin{table}[h]
\begin{small}
    \centering
    \begin{tabular}{|c|p{.8\textwidth}|}
    \hline
      $\fmodel(\phi(x;\hat{\Theta}),\hat{c})$ & $\phi(x;\hat{\Theta})\cdot\hat{c}$, where $\phi(x;\hat{\Theta}):\B{R}^n\rightarrow\B{R}^{n\times h}$ is a neural network with weight $\hat{\Theta}$ \\
      $g_t^{(i)}(\hat{c})$ & $\ell_t^{(i)}(\hat{\Theta}^{(i)},\hat{c})$ \\
      $\mathcal{A}_1$ &
      Gradient descent: $\hat{\Theta}^{(i+1)}\leftarrow\hat{\Theta}^{(i)}-\eta\nabla_{\hat{\Theta}}\sum_{t=1}^T\ell_t^{(i)}(\hat{\Theta},\hat{c}_t^{(i)})$ \\
      $\mathcal{A}_2$ & Same as Table \ref{tab:OMAC_convex} \\
      $\ObserveEnv$ & False \\
    \hline
    \end{tabular}
    \end{small}
    \tablespace
    \caption{OMAC with deep learning}
    \tableshrink
    \label{tab:OMAC_deep}
\end{table}


\begin{figure}
\centering
     \begin{subfigure}{\textwidth}
         \centering
         \includegraphics[width=\textwidth]{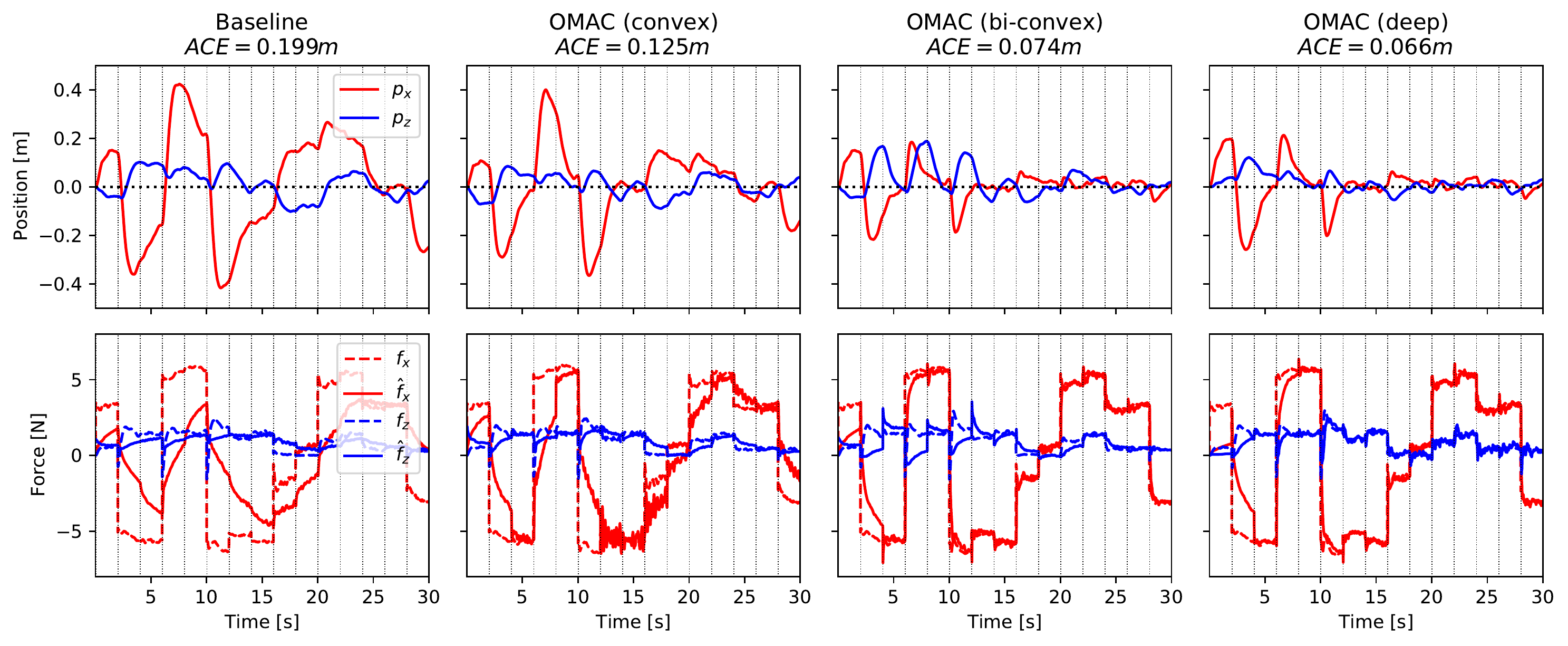}
         \caption{xz-position trajectories (top) and force predictions (bottom) in the quadrotor experiment from one random seed. The wind condition is switched randomly every \SI{2}{s} (indicated by the dotted vertical lines).}
         \label{fig:experiments-trajectory}
     \end{subfigure}

     \begin{subfigure}{\textwidth}
         \centering
         \includegraphics[width=\textwidth]{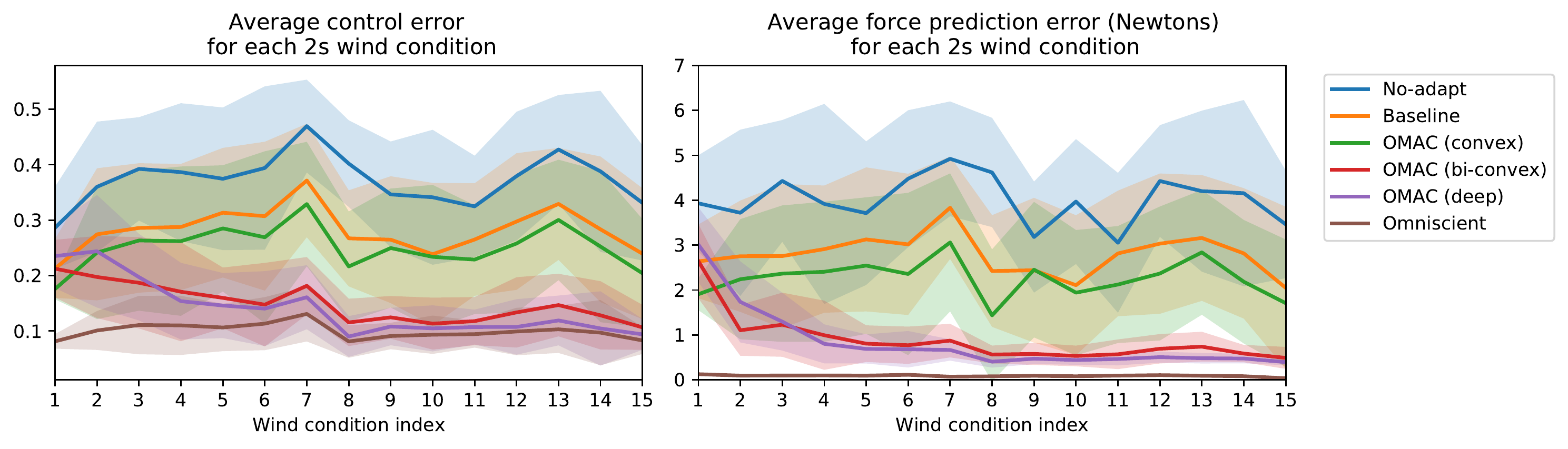}
         \caption{The evolution of control error (left) and prediction error (right) in each wind condition. The solid lines average 10 random seeds and the shade areas show standard deviations. The performance OMAC improves as the number of wind conditions increases (especially the bi-convex and deep variants) while the baseline not.}
         \label{fig:experiments-statistics}
     \end{subfigure}
    \caption{Drone experiment results.}
    \label{fig:experiments}
    \vspace{-10pt}
\end{figure}

To demonstrate the performance of OMAC, we consider two sets of experiments:
\begin{itemize}[leftmargin=*]
    \item \textbf{Inverted pendulum with external wind, gravity mismatch, and unknown damping.} The continuous-time model is $ml^2\ddot{\theta}-ml\hat{g}\sin\theta = u + f(\theta,\dot{\theta},c)$, where $\theta$ is the pendulum angle, $\dot{\theta}$/$\ddot{\theta}$ is the angular velocity/acceleration, $m$ is the pendulum mass and $l$ is the length, $\hat{g}$ is the gravity estimation, $c$ is the unknown parameter that indicates the external wind condition, and $f(\theta,\dot{\theta},c)$ is the unknown dynamics which depends on $c$, but also includes $c$-invariant terms such as damping and gravity mismatch. This model generalizes~\cite{liu2020robust} by considering damping and gravity mismatch. 
    \item \textbf{6-DoF quadrotor with 3-D wind disturbances.} We consider a 6-DoF quadrotor model with unknown wind-dependent aerodynamic force $f(x,c)\in\B{R}^3$, where $x\in\mathbb{R}^{13}$ is the drone state (including position, velocity, attitude, and angular velocity) and $c$ is the unknown parameter indicating the wind condition. We incorporate a realistic high-fidelity aerodynamic model from~\cite{shi2020adaptive}.
\end{itemize}


We consider 6 different controllers in the experiment (see more details about the dynamics model and controllers in the Appendix \ref{sec:experiment-detail}): 
\begin{itemize}[leftmargin=*]
    \item \textbf{No-adapt} is simply using $\hat{f}_t^{(i)}=0$, and \textbf{omniscient} is using $\hat{f}_t^{(i)}=f_t^{(i)}$.
    
    \item \textbf{OMAC (convex)} is based on Example \ref{example:convex}, where $\hat{f}_t^{(i)}=Y_1(x_t^{(i)})\hat{\Theta}^{(i)}+Y_2(x_t^{(i)})\hat{c}^{(i)}_t$. We use random Fourier features \citep{rahimi2007random,boffi2020regret} to generate both $Y_1$ and $Y_2$. We use OGD for both $\mathcal{A}_1$ and $\mathcal{A}_2$ in Table \ref{tab:OMAC_convex}.
        
    \item \textbf{OMAC (bi-convex)} is based on Example \ref{example:element-wise}, where $\hat{f}_t^{(i)}=Y(x_t^{(i)})\hat{\Theta}^{(i)}\hat{c}_t^{(i)}$. Similarly, we use random Fourier features to generate $Y$. Although the theoretical result in Section \ref{sec:case-2} requires $\ObserveEnv=\mathrm{True}$, we relax this in our experiments and use $G^{(i)}(\hat{\Theta})=\sum_{t=1}^T\ell_t^{(i)}(\hat{\Theta},\hat{c}_t^{(i)})$ in Table \ref{tab:OMAC_element_convex}, instead of $\sum_{t=1}^T\ell_t^{(i)}(\hat{\Theta},c^{(i)})$. We also deploy OGD for $\mathcal{A}_1$ and $\mathcal{A}_2$. \textbf{Baseline} has the same procedure except with $\hat{\Theta}^{(i)}\equiv \hat{\Theta}^{(1)}$, i.e., it calls the inner-adapter $\mathcal{A}_2$ at every step and does not update $\hat{\Theta}$, which is standard in adaptive control \cite{boffi2020regret,aastrom2013adaptive}.
    
    \item \textbf{OMAC (deep learning)} is based on Table \ref{tab:OMAC_deep}.  We use a DNN with spectral normalization \citep{miyato2018spectral,shi2019neural,shi2020neural,shi2021neural} to represent $\phi$, and use the $\texttt{Adam}$ optimizer \cite{kingma2014adam} for $\mathcal{A}_1$. Same as other methods, $\mathcal{A}_2$ is also an OGD algorithm.
\end{itemize}

\setlength\extrarowheight{0.08cm}
\begin{table}[h]
    \centering
    \begin{tabular}{p{3cm}||ccc}
    \multirow{4}{*}{
    \centering
    \includegraphics[width=0.85\linewidth]{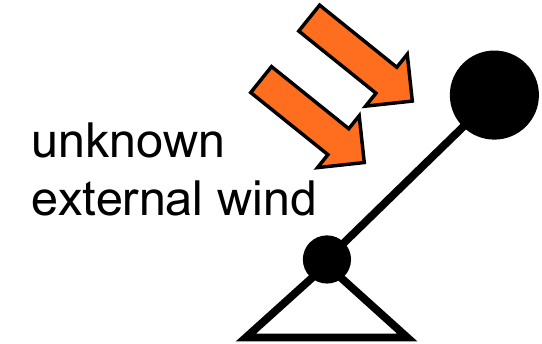}
    } & no-adapt & baseline & OMAC (convex) \\
    & $0.663\pm0.142$ & $0.311\pm0.112$ & $0.147\pm0.047$  \\
    \cline{2-4}
    & OMAC (bi-convex) & OMAC (deep) & omniscient \\
    & $0.129\pm0.044$ & $0.093\pm0.027$ & $0.034\pm0.017$ \\
    \hhline{====}
    \multirow{4}{*}{
    \centering
    \includegraphics[width=0.85\linewidth]{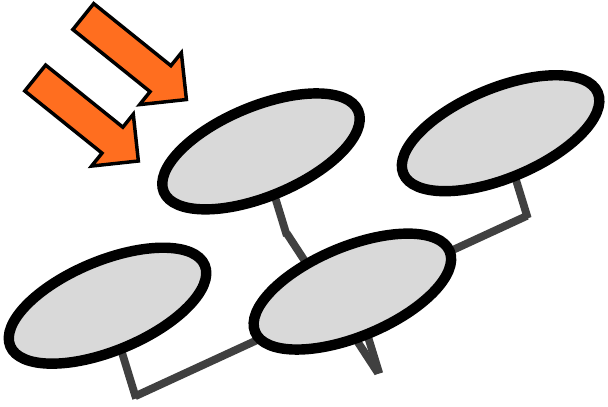}
    } & no-adapt & baseline & OMAC (convex) \\
    & $0.374\pm0.044$ & $0.283\pm0.043$ & $0.251\pm0.043$  \\
    \cline{2-4}
    & OMAC (bi-convex) & OMAC (deep) & omniscient \\
    & $0.150\pm0.019$ & $0.141\pm0.024$ & $0.100\pm0.018$ \\
    \end{tabular}
    \vspace{0.2cm}
    \caption{$\mathsf{ACE}$ results in pendulum (top) and drone (bottom) experiments from 10 random seeds.}
    \label{tab:experiment_results}
\end{table}

For all methods, we randomly switch the environment (wind) $c$ every \SI{2}{s}. To make a fair comparison, except no-adapt or omniscient, all methods have the same learning rate for the inner-adapter $\mathcal{A}_2$ and the dimensions of $\hat{c}$ are also same ($\dim(\hat{c})=20$ for the pendulum and $\dim(\hat{c})=30$ for the drone). $\mathsf{ACE}$ results from 10 random seeds are depicted in Table \ref{tab:experiment_results}. Figure \ref{fig:experiments} visualizes the drone experiment results. We observe that i) OMAC significantly outperforms the baseline; ii) OMAC adapts faster and faster as it encounters more environments but baseline cannot benefit from prior experience, especially for the bi-convex and deep versions (see Figure \ref{fig:experiments}), and iii) Deep OMAC achieves the best $\mathsf{ACE}$ due to the representation power of DNN. 

We note that in the drone experiments the performance of OMAC (convex) is only marginally better than the baseline. This is because the aerodynamic disturbance force in the quadrotor simulation is a multiplicative combination of the relative wind speed, the drone attitude, and the motor speeds; thus, the superposition structure $\hat{f}_t^{(i)}=Y_1(x_t^{(i)})\hat{\Theta}^{(i)}+Y_2(x_t^{(i)})\hat{c}^{(i)}_t$ cannot easily model the unknown force, while the bi-convex and deep learning variants both learn good controllers. In particular, OMAC (bi-convex) achieves similar performance as the deep learning case with much fewer parameters. On the other hand, in the pendulum experiments, OMAC (convex) is relatively better because a large component of the $c$-invariant part in the unknown dynamics is in superposition with the $c$-dependent part.
For more details and the pendulum experiment visualization we refer to the Appendix \ref{sec:experiment-detail}.

\section{Related work}
\textbf{Meta-learning and representation learning.} Empirically, representation learning and meta-learning have shown great success in various domains \citep{bengio2013representation}. In terms of control, meta-reinforcement learning is able to solve challenging mult-task RL problems \citep{gupta2018meta,finn2017model,nagabandi2018learning}. We remark that learning representation for control also refers to learn \emph{state} representation from rich observations \citep{lesort2018state,gelada2019deepmdp,zhang2019learning}, but we consider \emph{dynamics} representation in this paper. On the theoretic side, \cite{du2020few,tripuraneni2020provable,tripuraneni2020theory} have shown representation learning reduces sample complexity on new tasks, and ``task diversity'' is critical. Consistently, we show OMAC enjoys better convergence theoretically (Corollary \ref{thm:convex}) and empirically, and Theorem \ref{thm:bilinear} also implies the importance of \emph{environment diversity}. Another relevant line of theoretical work~\cite{finn2019online,denevi2019learning,khodak2019adaptive} uses tools from online convex optimization to show guarantees for gradient-based meta-learning, by assuming closeness of all tasks to
a single fixed point in parameter space. We eliminate this assumption by considering a hierarchical or nested online optimization procedure.

\textbf{Adaptive control and online control.} There is a rich body of literature studying Lyapunov stability and asymptotic convergence in adaptive control theory \citep{slotine1991applied,aastrom2013adaptive}. Recently, there has been increased interest in studying online adaptive control with non-asymptotic metrics (e.g., regret) from learning theory, largely for settings with linear systems such as online LQR or LQG with unknown dynamics \cite{simchowitz2020naive,dean2019sample,lale2020logarithmic,cohen2018online,abbasi2011regret}. The most relevant work \citep{boffi2020regret} gives the first regret bound of adaptive nonlinear control with unknown nonlinear dynamics and stochastic noise. Another relevant work studies online robust control of nonlinear systems with a mistake guarantee on the number of robustness failures \cite{ho2021online}. However, all these results focus on the single-task case. To our knowledge, we show the first non-asymptotic convergence result for multi-task adaptive control. On the empirical side, \cite{o2021meta,richards2021adaptive} combine adaptive nonlinear control with meta-learning, yielding encouraging experimental results. 

\textbf{Online matrix factorization.} Our work bears affinity to online matrix factorization, particularly the bandit collaborative filtering setting \citep{kawale2015efficient,bresler2016collaborative,wang2017factorization}. In this setting, one typically posits a linear low-rank projection as the target representation (e.g., a low-rank factorization of the user-item matrix), which is similar to our bilinear case. Setting aside the significant complexity introduced by nonlinear control, a key similarity comes from viewing different users as ``tasks'' and recommended items as ``actions''.  Prior work in this area has by and large not been able to establish strong regret bounds, in part due to the non-identifiability issue inherent in matrix factorization. In contrast, in our setting, one set of latent variables (e.g., the wind condition) has a concrete physical meaning that we are allowed to measure ($\ObserveEnv$ in Algorithm \ref{alg:meta-algorithm}), thus side-stepping this non-identifiability issue. 

\section{Concluding remarks}
\label{sec:conclusions}
We have presented OMAC, a meta-algorithm for adaptive nonlinear control in a sequence of unknown environments. We provide different instantiations of OMAC under varying assumptions and conditions, leading to the first non-asymptotic convergence guarantee for multi-task adaptive nonlinear control, and integration with deep learning. We also validate OMAC empirically. We use the average control error ($\mathsf{ACE}$) metric and focus on fully-actuated systems in this paper. Future work will seek to consider general cost functions and systems. It is also interesting to study end-to-end convergence guarantees of deep OMAC, with ideas from deep representation learning theories. Another interesting direction is to study how to incorporate other areas of control theory, such as roboust control.

\textbf{Broader impacts.} This work is primarily focused on establishing a theoretical understanding of meta-adaptive control. Such fundamental work will not directly have broader societal impacts.


\begin{ack}
This project was supported in part by funding from Raytheon and DARPA PAI, with additional support for Guanya Shi provided by the Simoudis Discovery Prize. There is no conflict of interest.
\end{ack}

\bibliographystyle{unsrtnat}
\bibliography{ref}

\appendix

\section{Appendix}
\label{app:proof}
\subsection{Proof of Lemma \ref{thm:ISS}}
\label{sec:proof-lemma1}
\begin{proof}
Using the e-ISS property in Assumption~\ref{assump:ISS}, we have:
\begin{equation}
\begin{aligned}
    & \frac{1}{TN}\sum_{i=1}^{N}\sum_{t=1}^T\|x_t^{(i)}\|
    \leq \frac{1}{TN}\sum_{i=1}^{N}\sum_{t=1}^T \left( \gamma\sum_{k=1}^{t-1}\rho^{t-1-k} \|B_k^{(i)}u_k^{(i)} - f_k^{(i)} + w_k^{(i)}\| \right) \\
    \stackrel{(a)}{\leq} & \frac{\gamma}{1-\rho}\frac{1}{TN}\sum_{i=1}^N\sum_{t=1}^{T-1}\|B_t^{(i)}u_t^{(i)} - f_t^{(i)} + w_t^{(i)}\| \\
    \stackrel{(b)}{\leq} & \frac{\gamma}{1-\rho}\sqrt{\frac{1}{TN}}\sqrt{\sum_{i=1}^N\sum_{t=1}^T\|B_t^{(i)}u_t^{(i)} - f_t^{(i)} + w_t^{(i)}\|^2}, 
\end{aligned}
\end{equation}
where $(a)$ and $(b)$ are from geometric series and Cauchy-Schwarz inequality respectively.
\end{proof}

\subsection{Proof of Lemma \ref{lemma:nested-oco}}
\label{sec:proof-lemma2}
This proof is based on the proof of Theorem 4.1 in \citep{agarwal2021regret}. 
\begin{proof}
For any $\bar{\Theta}\in\mathcal{K}_1$ and $\bar{c}^{(1:N)}\in\mathcal{K}_2$ we have
\begin{equation}
\begin{aligned}
    &\sum_{i=1}^N\sum_{t=1}^T\ell_t^{(i)}(\hat{\Theta}^{(i)},\hat{c}_t^{(i)}) - \sum_{i=1}^N\sum_{t=1}^T\ell_t^{(i)}(\bar{\Theta},\bar{c}^{(i)}) \\
    \stackrel{(a)}{\leq}& \sum_{i=1}^N\sum_{t=1}^T\nabla_{\hat{\Theta}}\ell_t^{(i)}(\hat{\Theta}^{(i)},\hat{c}_t^{(i)}) \cdot (\hat{\Theta}^{(i)}-\bar{\Theta}) + \sum_{i=1}^N\sum_{t=1}^T\nabla_{\hat{c}}\ell_t^{(i)}(\hat{\Theta}^{(i)},\hat{c}_t^{(i)}) \cdot (\hat{c}_t^{(i)}-\bar{c}^{(i)}) \\
    =& \sum_{i=1}^N \left[ G^{(i)}(\hat{\Theta}^{(i)}) - G^{(i)}(\bar{\Theta}) \right] + \sum_{i=1}^N\sum_{t=1}^T \left[ g_t^{(i)}(\hat{c}_t^{(i)}) - g_t^{(i)}(\bar{c}^{(i)}) \right] \\
    \leq& \underbrace{\sum_{i=1}^NG^{(i)}(\hat{\Theta}^{(i)}) - \min_{\Theta\in\mathcal{K}_1}\sum_{i=1}^NG^{(i)}(\Theta)}_{\text{the total regret of } \mathcal{A}_1, T\cdot o(N)} + \underbrace{\sum_{i=1}^N\sum_{t=1}^T g_t^{(i)}(\hat{c}^{(i)}_t) - \sum_{i=1}^N\min_{c^{(i)}\in\mathcal{K}_2}\sum_{t=1}^T g_t^{(i)}(c^{(i)})}_{\text{the total regret of } \mathcal{A}_2, N\cdot o(T)}.
\end{aligned}
\end{equation}
where we have $(a)$ because $\ell_t^{(i)}$ is convex. Note that the total regret of $\mathcal{A}_1$ is $T\cdot o(N)$ because $G^{(i)}$ is scaled up by a factor of $T$.
\end{proof}

\subsection{Proof of Theorem \ref{thm:convex-general}}
\label{sec:proof-theorem3}
\begin{proof}
Since $\Theta\in\mathcal{K}_1$ and $c^{(1:N)}\in\mathcal{K}_2$, applying Lemma~\ref{lemma:nested-oco} we have
\begin{equation}
\label{eq:proof-3-1}
\sum_{i=1}^N\sum_{t=1}^T\ell_t^{(i)}(\hat{\Theta}^{(i)},\hat{c}_t^{(i)}) - \sum_{i=1}^N\sum_{t=1}^T\ell_t^{(i)}(\Theta,c^{(i)}) \leq T\cdot o(N) + N\cdot o(T)
\end{equation}
Recall that the definition of $\ell_t^{(i)}$ is $\ell_t^{(i)}(\hat{\Theta},\hat{c})=\|\fmodel(\phi(x_t^{(i)};\hat{\Theta}),\hat{c})-y_t^{(i)}\|^2$, and $y_t^{(i)}=f_t^{(i)}-w_t^{(i)}$. Therefore we have 
\begin{equation}
\begin{aligned}
\ell_t^{(i)}(\hat{\Theta}^{(i)},\hat{c}_t^{(i)}) &= \|\hat{f}_t^{(i)}-f_t^{(i)}+w_t^{(i)}\|^2 = \|B_t^{(i)}u_t^{(i)}-f_t^{(i)}+w_t^{(i)}\|^2 \\
\ell_t^{(i)}(\Theta,c^{(i)}) &= \|w_t^{(i)}\|^2 \leq W^2.
\end{aligned}
\end{equation}
Then applying Lemma~\ref{thm:ISS} we have
\begin{equation}
\begin{aligned}
\label{eq:proof-3-2}
    \mathsf{ACE} &\leq \frac{\gamma}{1-\rho} \sqrt{\frac{\sum_{i=1}^N\sum_{t=1}^T\|B_t^{(i)}u_t^{(i)}-f_t^{(i)}+w_t^{(i)}\|^2}{TN}} \\
    &= \frac{\gamma}{1-\rho} \sqrt{\frac{\sum_{i=1}^N\sum_{t=1}^T\ell_t^{(i)}(\hat{\Theta}^{(i)},\hat{c}_t^{(i)})}{TN}} \\
    &\stackrel{(a)}{\leq} \frac{\gamma}{1-\rho} \sqrt{\frac{T\cdot o(N) + N\cdot o(T) + \sum_{i=1}^N\sum_{t=1}^T\ell_t^{(i)}(\Theta,c^{(i)})}{TN}} \\
    &\leq \frac{\gamma}{1-\rho} \sqrt{W^2 + \frac{o(T)}{T} + \frac{o(N)}{N}},
\end{aligned}
\end{equation}
where $(a)$ uses \eqref{eq:proof-3-1}.
\end{proof}

\subsection{Proof of Corollary \ref{thm:convex}}


Before the proof, we first present a lemma~\cite{hazan2019introduction} which shows that the regret of an Online Gradient Descent (OGD) algorithm.

\begin{lemma}[Regret of OGD~\cite{hazan2019introduction}]
\label{lemma:ogd_regret}
Suppose $f_{1:T}(x)$ is a sequence of differentiable convex cost functions from $\B{R}^n$ to $\B{R}$, and $\mathcal{K}$ is a convex set in $\B{R}^n$ with diameter $D$, i.e., $\forall x_1,x_2\in\mathcal{K},\|x_1-x_2\|\leq D$. We denote by $G>0$ an upper bound on the norm of the gradients of $f_{1:T}$ over $\mathcal{K}$, i.e., $\|\nabla f_t(x)\|\leq G$ for all $t\in[1,T]$ and $x\in\mathcal{K}$. 

The OGD algorithm initializes $x_1\in\mathcal{K}$. At time step $t$, it plays $x_t$, observes cost $f_t(x_t)$, and updates $x_{t+1}$ by $\Pi_{\mathcal{K}}(x_t-\eta_t\nabla f_t(x_t))$ where $\Pi_{\mathcal{K}}$ is the projection onto $\mathcal{K}$, i.e., $\Pi_{\mathcal{K}}(y)=\arg\min_{x\in\mathcal{K}}\|x-y\|$. OGD with learning rates $\{ \eta_t=\frac{D}{G\sqrt{t}} \}$ guarantees the following:
\begin{equation}
    \sum_{t=1}^T f_t(x_t) - \min_{x^*\in\mathcal{K}}\sum_{t=1}^T f_t(x^*) \leq \frac{3}{2}GD\sqrt{T}.
\end{equation}
\end{lemma}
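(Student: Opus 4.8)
The plan is to follow the standard potential-function (Zinkevich) analysis of online gradient descent. Fix a comparator $x^\ast \in \arg\min_{x\in\mathcal{K}}\sum_{t=1}^T f_t(x)$ and abbreviate $\nabla_t = \nabla f_t(x_t)$. The first step reduces the regret to a sum of linear terms: since each $f_t$ is convex and differentiable,
\[
 f_t(x_t) - f_t(x^\ast) \leq \nabla_t^\top (x_t - x^\ast),
\]
so it suffices to upper bound $\sum_{t=1}^T \nabla_t^\top(x_t - x^\ast)$.

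The second step is a per-step potential inequality built from the update rule and the non-expansiveness of the Euclidean projection onto the convex set $\mathcal{K}$ (the Pythagorean property). Because $x^\ast \in \mathcal{K}$,
\[
 \|x_{t+1} - x^\ast\|^2 = \|\Pi_{\mathcal{K}}(x_t - \eta_t \nabla_t) - x^\ast\|^2 \leq \|x_t - \eta_t \nabla_t - x^\ast\|^2 .
\]
Expanding the right-hand side as $\|x_t - x^\ast\|^2 - 2\eta_t \nabla_t^\top(x_t - x^\ast) + \eta_t^2\|\nabla_t\|^2$ and rearranging gives
\[
 \nabla_t^\top(x_t - x^\ast) \leq \frac{\|x_t - x^\ast\|^2 - \|x_{t+1} - x^\ast\|^2}{2\eta_t} + \frac{\eta_t}{2}\|\nabla_t\|^2 .
\]

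The third step sums this over $t=1,\dots,T$ and controls the two resulting pieces. For the telescoping piece, since the step sizes $\eta_t = D/(G\sqrt{t})$ vary, I would reorganize via Abel (summation-by-parts), rewriting it as $\tfrac12 \sum_{t=1}^T \|x_t - x^\ast\|^2\big(\tfrac{1}{\eta_t} - \tfrac{1}{\eta_{t-1}}\big)$ with the convention $1/\eta_0 = 0$ (discarding a nonnegative boundary term). As $1/\eta_t$ is increasing the increments are nonnegative, so I may apply the diameter bound $\|x_t - x^\ast\|^2 \leq D^2$ termwise and collapse the telescope to $\tfrac{D^2}{2\eta_T} = \tfrac12 D G \sqrt{T}$. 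For the gradient piece, using $\|\nabla_t\| \leq G$ and the harmonic-type bound $\sum_{t=1}^T t^{-1/2} \leq 2\sqrt{T}$ gives $\tfrac{G^2}{2}\sum_{t=1}^T \eta_t = \tfrac{GD}{2}\sum_{t=1}^T t^{-1/2} \leq GD\sqrt{T}$. Adding the two contributions yields $\tfrac{3}{2} G D \sqrt{T}$, which is exactly the claimed regret.

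The only nontrivial bookkeeping — hence the main obstacle — is the telescoping step with time-varying learning rates: one must reorganize $\sum_t \tfrac{1}{\eta_t}\big(\|x_t-x^\ast\|^2 - \|x_{t+1}-x^\ast\|^2\big)$ by summation-by-parts rather than telescoping naively, and crucially exploit that $1/\eta_t$ is nondecreasing so that the diameter bound can be applied with the correct sign. The convexity reduction, the projection non-expansiveness, and the $\sqrt{T}$ sum are all routine.
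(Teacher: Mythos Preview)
Your proof is correct and is precisely the standard Zinkevich potential-function argument. The paper does not supply its own proof of this lemma; it simply quotes the result from \cite{hazan2019introduction}, whose proof is exactly the one you have written.
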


Define $\mathcal{R}(\mathcal{A}_1)$ as the total regret of the outer-adapter $\mathcal{A}_1$, and $\mathcal{R}(\mathcal{A}_2)$ as the total regret of the inner-adapter $\mathcal{A}_2$. Recall that in Theorem \ref{thm:convex-general} we show that $\mathsf{ACE}(\mathrm{OMAC})\leq\frac{\gamma}{1-\rho}\sqrt{W^2+\frac{\mathcal{R}(\mathcal{A}_1)+\mathcal{R}(\mathcal{A}_2)}{TN}}$. Now we will prove Corollary \ref{thm:convex} by analyzing $\mathcal{R}(\mathcal{A}_1)$ and $\mathcal{R}(\mathcal{A}_2)$ respectively.

\begin{proof}[Proof of Corollary \ref{thm:convex}]
Since the true dynamics $f(x,c^{(i)})=Y_1(x)\Theta+Y_2(x)c^{(i)}$, we have
\begin{equation}
\begin{aligned}
    \ell_t^{(i)}(\hat{\Theta},\hat{c}) = \|Y_1(x_t^{(i)})\hat{\Theta}+Y_2(x_t^{(i)})\hat{c} - Y_1(x_t^{(i)})\Theta-Y_2(x_t^{(i)})c^{(i)} + w_t^{(i)}\|^2.
\end{aligned}
\end{equation}
Recall that $g_t^{(i)}(\hat{c}) = \nabla_{\hat{c}}\ell_t^{(i)}(\hat{\Theta}^{(i)},\hat{c}_t^{(i)})\cdot\hat{c}$, which is convex (linear) w.r.t. $\hat{c}$. The gradient of $g_t^{(i)}$ is upper bounded as
\begin{equation}
\begin{aligned}
    \|\nabla_{\hat{c}}g_t^{(i)}\| &= \left\| 2Y_2(x_t^{(i)})^\top \left( Y_1(x_t^{(i)})\hat{\Theta}^{(i)}+Y_2(x_t^{(i)})\hat{c}_t^{(i)} - Y_1(x_t^{(i)})\Theta-Y_2(x_t^{(i)})c^{(i)} + w_t^{(i)} \right) \right\| \\
    &\leq 2K_2K_1K_\Theta + 2K_2^2K_c + 2K_2K_1K_\Theta + 2K_2^2K_c + 2K_2W \\
    &= \underbrace{4K_1K_2K_\Theta + 4K_2^2K_c + 2K_2W}_{C_2}.
\end{aligned}
\end{equation}
From Lemma~\ref{lemma:ogd_regret}, using learning rates $\eta_t^{(i)}=\frac{2K_c}{C_2\sqrt{t}}$ for all $i$, the regret of $\mathcal{A}_2$ at each outer iteration is upper bounded by $3K_cC_2\sqrt{T}$. Then the total regret of $\mathcal{A}_2$ is bounded as 
\begin{equation}
    \mathcal{R}(\mathcal{A}_2) \leq 3K_cC_2N\sqrt{T}.
\end{equation}
Now let us study $\mathcal{A}_1$. Similarly, recall that $G^{(i)}(\hat{\Theta})=\sum_{t=1}^T\nabla_{\hat{\Theta}}\ell_t^{(i)}(\hat{\Theta}^{(i)},\hat{c}_t^{(i)})\cdot\hat{\Theta}$, which is convex (linear) w.r.t. $\hat{\Theta}$. The gradient of $G^{(i)}$ is upper bounded as
\begin{equation}
\begin{aligned}
    \|\nabla_{\hat{\Theta}}G^{(i)}\| &= \left\| \sum_{t=1}^T 2Y_1(x_t^{(i)})^\top \left( Y_1(x_t^{(i)})\hat{\Theta}^{(i)}+Y_2(x_t^{(i)})\hat{c}_t^{(i)} - Y_1(x_t^{(i)})\Theta-Y_2(x_t^{(i)})c^{(i)} + w_t^{(i)} \right) \right\| \\
    &\leq T\left( 2K_1^2K_\Theta + 2K_1K_2K_c + 2K_1^2K_\Theta + 2K_1K_2K_c + 2K_1W \right) \\
    &= T\big( \underbrace{4K_1^2K_\Theta + 4K_1K_2K_c + 2K_1W}_{C_1} \big).
\end{aligned}
\end{equation}
From Lemma~\ref{lemma:ogd_regret}, using learning rates $\bar{\eta}^{(i)}=\frac{2K_\Theta}{TC_1\sqrt{i}}$, the total regret of $\mathcal{A}_1$ is upper bounded as
\begin{equation}
    \mathcal{R}(\mathcal{A}_1) \leq 3K_\Theta TC_1\sqrt{N}.
\end{equation}
Finally using Theorem~\ref{thm:convex-general} we have
\begin{equation}
\begin{aligned}
    \mathsf{ACE}(\mathrm{OMAC}) &\leq \frac{\gamma}{1-\rho}\sqrt{W^2+\frac{\mathcal{R}(\mathcal{A}_1)+\mathcal{R}(\mathcal{A}_2)}{TN}} \\
    &\leq \frac{\gamma}{1-\rho}\sqrt{W^2 + 3(K_\Theta C_1\frac{1}{\sqrt{N}}+K_cC_2\frac{1}{\sqrt{T}})}.
\end{aligned}
\end{equation}
Now let us analyze $\mathsf{ACE}(\mathrm{baseline\;adaptive\; control})$. To simplify notations, we define $\bar{Y}(x)=[Y_1(x)\;\;Y_2(x)]:\B{R}^{n}\rightarrow\B{R}^{n\times(p+h)}$ and $\hat{\alpha}=[\hat{\Theta};\hat{c}]\in\B{R}^{p+h}$. The baseline adaptive controller updates the whole vector $\hat{\alpha}$ at every time step. We denote the ground truth parameter by $\alpha^{(i)}=[\Theta;c^{(i)}]$, and the estimation by $\hat{\alpha}_t^{(i)}=[\hat{\Theta}_t^{(i)};\hat{c}_t^{(i)}]$. We have $\|\alpha^{(i)}\|\leq\sqrt{K_\Theta^2+K_c^2}$. Define $\bar{\mathcal{K}}=\{\hat{\alpha}=[\hat{\Theta};\hat{c}]:\|\hat{\Theta}\|\leq\mathcal{K}_\Theta,\|\hat{c}\|\leq\mathcal{K}_c\}$, which is a convex set in $\B{R}^{p+h}$.

Note that the loss function for the baseline adaptive control is $\bar{\ell}_t^{(i)}(\hat{\alpha})=\|\bar{Y}(x_t^{(i)})\hat{\alpha}-Y_1(x_t^{(i)})\Theta-Y_2(x_t^{(i)})c^{(i)}+w_t^{(i)}\|^2.$ The gradient of $\bar{\ell}_t^{(i)}$ is
\begin{equation}
\begin{aligned}
    \nabla_{\hat{\alpha}}\bar{\ell}_t^{(i)}(\hat{\alpha}) = 2\begin{bmatrix}Y_1(x_t^{(i)})^\top \\ Y_2(x_t^{(i)})^\top\end{bmatrix}
    (Y_1(x_t^{(i)})\hat{\Theta}+Y_2(x_t^{(i)})\hat{c}-Y_1(x_t^{(i)})\Theta-Y_2(x_t^{(i)})c^{(i)}+w_t^{(i)}),
\end{aligned}
\end{equation}
whose norm on $\bar{\mathcal{K}}$ is bounded by
\begin{equation}
    \sqrt{4(K_1^2+K_2^2)(2K_1K_\Theta+2K_2K_c+W)^2}=\sqrt{C_1^2+C_2^2}.
\end{equation}
Therefore, from Lemma \ref{lemma:ogd_regret}, running OGD on $\bar{\mathcal{K}}$ with learning rates $\frac{2\sqrt{K_\Theta^2+K_c^2}}{\sqrt{C_1^2+C_2^2}\sqrt{t}}$ gives the following guarantee at each outer iteration:
\begin{equation}
    \sum_{t=1}^T \bar{\ell}_t^{(i)}(\hat{\alpha}_t^{(i)}) - \bar{\ell}_t^{(i)}(\alpha^{(i)}) \leq 3\sqrt{K_\Theta^2+K_c^2}\sqrt{C_1^2+C_2^2}\sqrt{T}.
\end{equation}
Finally, similar as \eqref{eq:proof-3-2} we have
\begin{equation}
\begin{aligned}
    &\mathsf{ACE}(\mathrm{baseline\;adaptive\; control})
    \leq \frac{\gamma}{1-\rho} \sqrt{\frac{\sum_{i=1}^N\sum_{t=1}^T\bar{\ell}_t^{(i)}(\hat{\alpha}_t^{(i)})}{TN}} \\
    &\leq \frac{\gamma}{1-\rho} \sqrt{\frac{\sum_{i=1}^N 3\sqrt{K_\Theta^2+K_c^2}\sqrt{C_1^2+C_2^2}\sqrt{T}+\sum_{i=1}^N\sum_{t=1}^T\bar{\ell}_t^{(i)}(\alpha^{(i)})}{TN}} \\
    &\leq \frac{\gamma}{1-\rho} \sqrt{W^2 + 3\sqrt{K_\Theta^2+K_c^2}\sqrt{C_1^2+C_2^2}\frac{1}{\sqrt{T}}}.
\end{aligned}
\end{equation}
Note that this bound does not improve as the number of environments (i.e., $N$) increases.
\end{proof}

\subsection{Proof of Theorem \ref{thm:nested-oco-regret-element}}
\label{sec:proof-theorem5}
\begin{proof}For any $\Theta\in\mathcal{K}_1$ and $c^{(1:N)}\in\mathcal{K}_2$ we have
\begin{equation}
\begin{aligned}
    & \sum_{i=1}^N\sum_{t=1}^T\ell_t^{(i)}(\hat{\Theta}^{(i)},\hat{c}_t^{(i)}) - \sum_{i=1}^N \sum_{t=1}^T\ell_t^{(i)}(\Theta,c^{(i)}) \\
    = & \sum_{i=1}^N\sum_{t=1}^T \left[ \ell_t^{(i)}(\hat{\Theta}^{(i)},\hat{c}_t^{(i)}) - \ell_t^{(i)}(\hat{\Theta}^{(i)},c^{(i)}) \right] + \sum_{i=1}^N\sum_{t=1}^T \left[ \ell_t^{(i)}(\hat{\Theta}^{(i)},c^{(i)}) - \ell_t^{(i)}(\Theta,c^{(i)}) \right] \\
    = & \sum_{i=1}^N \underbrace{\sum_{t=1}^T \left[ g_t^{(i)}(c_t^{(i)}) - g_t^{(i)}(c^{(i)}) \right]}_{\leq o(T)} + \underbrace{\sum_{i=1}^N \left[ G^{(i)}(\hat{\Theta}^{(i)}) - G^{(i)}(\Theta) \right]}_{\leq T\cdot o(N)}
\end{aligned}
\end{equation}
Then combining with Lemma~\ref{thm:ISS} results in the $\mathsf{ACE}$ bound.
\end{proof}

\subsection{Proof of Theorem \ref{thm:bilinear}}
\label{sec:proof-theorem6}
\begin{proof}
Note that in this case the available measurement of $f$ at the end of the outer iteration $i$ is:
\begin{equation}
    y_t^{(j)} = Y(x_t^{(j)})\Theta c^{(j)} - w_t^{(j)}, \quad 1\leq j\leq i, 1\leq t\leq T.
\end{equation}
Recall that the Ridge-regression estimation of $\hat{\Theta}$ is given by
\begin{equation}
\begin{aligned}
    \hat{\Theta}^{(i+1)} &= \arg\min_{\hat{\Theta}}\lambda\|\hat{\Theta}\|_F^2 + \sum_{j=1}^i \sum_{t=1}^T\|Y(x_t^{(j)})\hat{\Theta} c^{(j)}-y_t^{(j)}\|^2 \\
    &= \arg\min_{\hat{\Theta}}\lambda\|\hat{\Theta}\|_F^2 + \sum_{j=1}^i \sum_{t=1}^T\|Z_t^{(j)}\vv{\hat{\Theta}}-y_t^{(j)}\|^2.
\end{aligned}
\end{equation}
Note that $y_t^{(j)}=(c^{(j)\top}\otimes Y(x_t^{(j)}))\cdot \vv{\Theta}-w_t^{(j)}=Z_t^{(j)}\vv{\Theta}-w_t^{(j)}$. Define $V_i=\lambda I + \sum_{j=1}^i \sum_{t=1}^T Z_t^{(j)\top}Z_t^{(j)}$. Then from the Theorem 2 of \cite{abbasi2011improved} we have 
\begin{equation}
    \|\vv{\hat{\Theta}^{(i+1)}-\Theta}\|_{V_i} \leq 
    R\sqrt{\bar{p}h\log(\frac{1+iT\cdot nK_Y^2K_c^2/\lambda}{\delta})} + \sqrt{\lambda}K_\Theta
\end{equation}
for all $i$ with probability at least $1-\delta$. Note that the environment diversity condition implies: $V_i\succ\Omega(i)I$. Finally we have 
\begin{equation}
\label{eq:proof-6-1}
\|\hat{\Theta}^{(i+1)}-\Theta\|^2_F = \|\vv{\hat{\Theta}^{(i+1)}-\Theta}\|^2 \leq
O(\frac{1}{i}) O(\log(iT/\delta))=O(\frac{\log(iT/\delta)}{i}).
\end{equation}

Then with a fixed $\hat{\Theta}^{(i+1)}$, in outer iteration $i+1$ we have
\begin{equation}
    g_t^{(i+1)}(\hat{c}) = \|Y(x_t^{(i+1)})\hat{\Theta}^{(i+1)}\hat{c} - Y(x_t^{(i+1)})\Theta c^{(i+1)} + w_t^{(i+1)}\|^2.
\end{equation}

Since $\mathcal{A}_2$ gives sublinear regret, we have
\begin{equation}
\label{eq:proof-6-2}
\begin{aligned}
    &\sum_{t=1}^T\| Y(x_t^{(i+1)})\hat{\Theta}^{(i+1)}\hat{c}_t^{(i+1)} - Y(x_t^{(i+1)})\Theta c^{(i+1)} + w_t^{(i+1)} \|^2 \\
    &- \min_{\hat{c}\in\mathcal{K}_2}\sum_{t=1}^T\| Y(x_t^{(i+1)})\hat{\Theta}^{(i+1)}\hat{c} - Y(x_t^{(i+1)})\Theta c^{(i+1)} + w_t^{(i+1)} \|^2 = o(T).
\end{aligned}
\end{equation}

Note that 
\begin{equation}
\label{eq:proof-6-3}
\begin{aligned}
    &\min_{\hat{c}\in\mathcal{K}_2}\sum_{t=1}^T\| Y(x_t^{(i+1)})\hat{\Theta}^{(i+1)}\hat{c} - Y(x_t^{(i+1)})\Theta c^{(i+1)} + w_t^{(i+1)} \|^2 \\
    \leq& \sum_{t=1}^T\| Y(x_t^{(i+1)})\hat{\Theta}^{(i+1)}c^{(i+1)} - Y(x_t^{(i+1)})\Theta c^{(i+1)} + w_t^{(i+1)} \|^2 \\
    \stackrel{(a)}{\leq}& TW^2 + T\cdot K_Y^2\cdot O(\frac{\log(iT/\delta)}{i})\cdot K_c^2,
\end{aligned}
\end{equation}
where $(a)$ uses \eqref{eq:proof-6-1}.

Finally we have 
\begin{equation}
\begin{aligned}
    &\sum_{t=1}^T \|\hat{f}_t^{(i+1)} - f_t^{(i+1)} + w_t^{(i+1)}\|^2 \\
    =&\sum_{t=1}^T\| Y(x_t^{(i+1)})\hat{\Theta}^{(i+1)}\hat{c}_t^{(i+1)} - Y(x_t^{(i+1)})\Theta c^{(i+1)} + w_t^{(i+1)} \|^2 \\
    \stackrel{(b)}{\leq}& o(T) + TW^2 + O(T\frac{\log(iT/\delta)}{i})
\end{aligned}
\end{equation}
for all $i$ with probability at least $1-\delta$. $(b)$ is from \eqref{eq:proof-6-2} and \eqref{eq:proof-6-3}. Then with Lemma \ref{thm:ISS} we have (with probability at least $1-\delta$)
\begin{equation}
\label{eq:proof-6-4}
\begin{aligned}
\mathsf{ACE} &\leq \frac{\gamma}{1-\rho}\sqrt{\frac{\sum_{i=1}^{N}o(T) + TW^2 + O(T\frac{\log(iT/\delta)}{i})}{TN}} \\
&\leq \frac{\gamma}{1-\rho}\sqrt{W^2+\frac{o(T)}{T}+\frac{O(\log(NT/\delta))}{N}\sum_{i=1}^N\frac{1}{i}} \\
&\leq \frac{\gamma}{1-\rho}\sqrt{W^2+\frac{o(T)}{T}+O(\frac{\log(NT/\delta)\log(N)}{N})}.
\end{aligned}    
\end{equation}
If we relax the environment diversity condition to $\Omega(\sqrt{i})$, in \eqref{eq:proof-6-1} we will have $O(\frac{\log(iT/\delta)}{\sqrt{i}})$. Therefore in \eqref{eq:proof-6-4} the last term becomes $\frac{O(\log(NT/\delta))}{N}\sum_{i=1}^N\frac{1}{\sqrt{i}}\leq\frac{O(\log(NT/\delta))}{\sqrt{N}}$.
\end{proof}

\subsection{Experimental details}
\label{sec:experiment-detail}
\subsubsection{Theoretical justification of Deep OMAC}
Recall that in Deep OMAC (Table~\ref{tab:OMAC_deep} in Section~\ref{sec:case-4}) the model class is $F(\phi(x;\hat{\Theta}),\hat{c})=\phi(x;\hat{\Theta})\cdot\hat{c}$, where $\phi:\B{R}^n\rightarrow\B{R}^{n\times h}$ is a neural network parameterized by $\hat{\Theta}$. We provide the following proposition to justify such choice of model class. 
\begin{proposition}
Let $\bar{f}(x,\bar{c}):[-1,1]^n\times[-1,1]^{\bar{h}}\rightarrow\B{R}$ be an analytic function of $[x,\bar{c}]\in[-1,1]^{n+\bar{h}}$ for $n,\bar{h}\geq 1$. Then for any $\epsilon>0$, there exist $h(\epsilon)\in\B{Z}^+$, a polynomial $\bar{\phi}(x):[-1,1]^n\rightarrow\B{R}^{h(\epsilon)}$ and another polynomial $c(\bar{c}):[-1,1]^{\bar{h}}\rightarrow\B{R}^{h(\epsilon)}$ such that 
$$\max_{[x,\bar{c}]\in[-1,1]^{n+\bar{h}}}\|\bar{f}(x,\bar{c})-\bar{\phi}(x)^\top c(\bar{c})\|\leq\epsilon$$
and $h(\epsilon)=O((\log(1/\epsilon))^{\bar{h}})$.
\end{proposition}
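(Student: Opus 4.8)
The plan is to combine a Chebyshev-type polynomial approximation of the analytic function $\bar{f}$ with a separation-of-variables argument that splits each monomial of the approximating polynomial into an $x$-part and a $\bar{c}$-part. First I would invoke the standard multivariate approximation theory for analytic functions (as in the cited reference \cite{trefethen2017multivariate}): since $\bar{f}$ is analytic on $[-1,1]^{n+\bar{h}}$, it extends holomorphically to a Bernstein-type polyellipse neighborhood, and hence its truncated multivariate Chebyshev expansion $p_d(x,\bar{c})$ converges geometrically, i.e.\ $\max_{[x,\bar{c}]}\|\bar{f}(x,\bar{c})-p_d(x,\bar{c})\|\leq C\sigma^{-d}$ for some $\sigma>1$ depending only on $\bar{f}$, where $d$ is the per-coordinate degree. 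Solving $C\sigma^{-d}\le\epsilon$ gives $d=O(\log(1/\epsilon))$. This reduces the problem to exactly representing a polynomial of degree $d$ in each of the $n+\bar{h}$ variables as a sum of products $\bar\phi(x)^\top c(\bar c)$.

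Next I would carry out the separation step. Write $p_d(x,\bar c)=\sum_{\alpha,\beta} a_{\alpha\beta}\, x^\alpha \bar c^\beta$ where $\alpha\in\{0,\dots,d\}^n$, $\beta\in\{0,\dots,d\}^{\bar h}$ are multi-indices and $x^\alpha$, $\bar c^\beta$ are the corresponding monomials (or Chebyshev products). Group the sum by $\beta$: $p_d(x,\bar c)=\sum_{\beta\in\{0,\dots,d\}^{\bar h}} \big(\sum_\alpha a_{\alpha\beta} x^\alpha\big)\,\bar c^\beta$. Index the distinct $\beta$'s by $k=1,\dots,h$ with $h=(d+1)^{\bar h}$; define $c(\bar c)\in\B{R}^h$ to have $k$-th entry $\bar c^{\beta_k}$ (a polynomial in $\bar c$), and $\bar\phi(x)\in\B{R}^h$ to have $k$-th entry $\sum_\alpha a_{\alpha\beta_k} x^\alpha$ (a polynomial in $x$). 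Then $\bar\phi(x)^\top c(\bar c)=p_d(x,\bar c)$ exactly, so the approximation error is just $\max\|\bar f-p_d\|\le\epsilon$. Finally, since $d=O(\log(1/\epsilon))$, we get $h=h(\epsilon)=(d+1)^{\bar h}=O((\log(1/\epsilon))^{\bar h})$, which is the claimed bound. (For the vector-valued case in the paper where $\bar f$ maps into $\B{R}^n$, apply the scalar argument componentwise and stack, at the cost of a dimension-independent constant; the statement as written treats $\bar f$ as scalar-valued, so componentwise bookkeeping is only needed for the application to $F(\phi(x;\hat\Theta),\hat c)=\phi(x;\hat\Theta)\hat c$.)

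The main obstacle I anticipate is making the quantitative geometric-convergence claim for multivariate Chebyshev truncation fully rigorous: one needs that analyticity of $\bar f$ on the closed cube $[-1,1]^{n+\bar h}$ yields a genuine polyellipse (Bernstein polyradius) $\rho^{n+\bar h}$ with all radii strictly greater than $1$ on which $\bar f$ is bounded and holomorphic, and then that the tensor-product Chebyshev coefficients decay like $\prod \rho_j^{-\alpha_j}$. This is classical (Trefethen, \emph{Approximation Theory and Approximation Practice}, multivariate chapter, and \cite{trefethen2017multivariate}) but requires care about which notion of "analytic on a compact set" is being used — I would state it as: analytic on a neighborhood of the cube, hence such a polyellipse exists by compactness. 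Everything after that — the truncation-degree bookkeeping and the separation-of-variables regrouping — is routine. A secondary, minor point is that $h(\epsilon)=(d+1)^{\bar h}$ is an overcount (many coefficient vectors $\sum_\alpha a_{\alpha\beta_k}x^\alpha$ could be merged or are zero), but since we only need an upper bound of the stated order, no optimization of $h$ is required.
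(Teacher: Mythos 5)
Your proposal is correct and follows essentially the route the paper intends: the paper itself offers no written proof, deferring to the cited multivariate polynomial approximation result of Trefethen, which is precisely the geometric-convergence Chebyshev truncation you invoke, combined with the (routine) regrouping of the truncated polynomial by $\bar{c}$-monomials to get the separated form $\bar{\phi}(x)^\top c(\bar{c})$ with $h(\epsilon)=(d+1)^{\bar{h}}=O((\log(1/\epsilon))^{\bar{h}})$. Your handling of the two delicate points — interpreting analyticity on the closed cube as analyticity on a neighborhood (hence a Bernstein polyellipse with radii $>1$ by compactness) and noting that only the degree in the $\bar{c}$-variables enters the count $h(\epsilon)$ — is sound, so the argument is complete and in fact more explicit than the paper's.
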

Note that here the dimension of $c$ depends on the precision $1/\epsilon$. In practice, for OMAC algorithms, the dimension of $\hat{c}$ or $c$ (i.e., the latent space dimension) is a hyperparameter, and not necessarily equal to the dimension of $\bar{c}$ (i.e., the dimension of the actual environmental condition). A variant of this proposition is proved in \cite{trefethen2017multivariate}. Since neural networks are universal approximators for polynomials, this theorem implies that the structure $\phi(x;\hat{\Theta})\hat{c}$ can approximate any analytic function $\bar{f}(x,\bar{c})$, and the dimension of $\hat{c}$ only increases polylogarithmically as the precision increases.

\subsubsection{Pendulum dynamics model and controller design}
\begin{figure}
    \centering
    \includegraphics[width=1.0\textwidth]{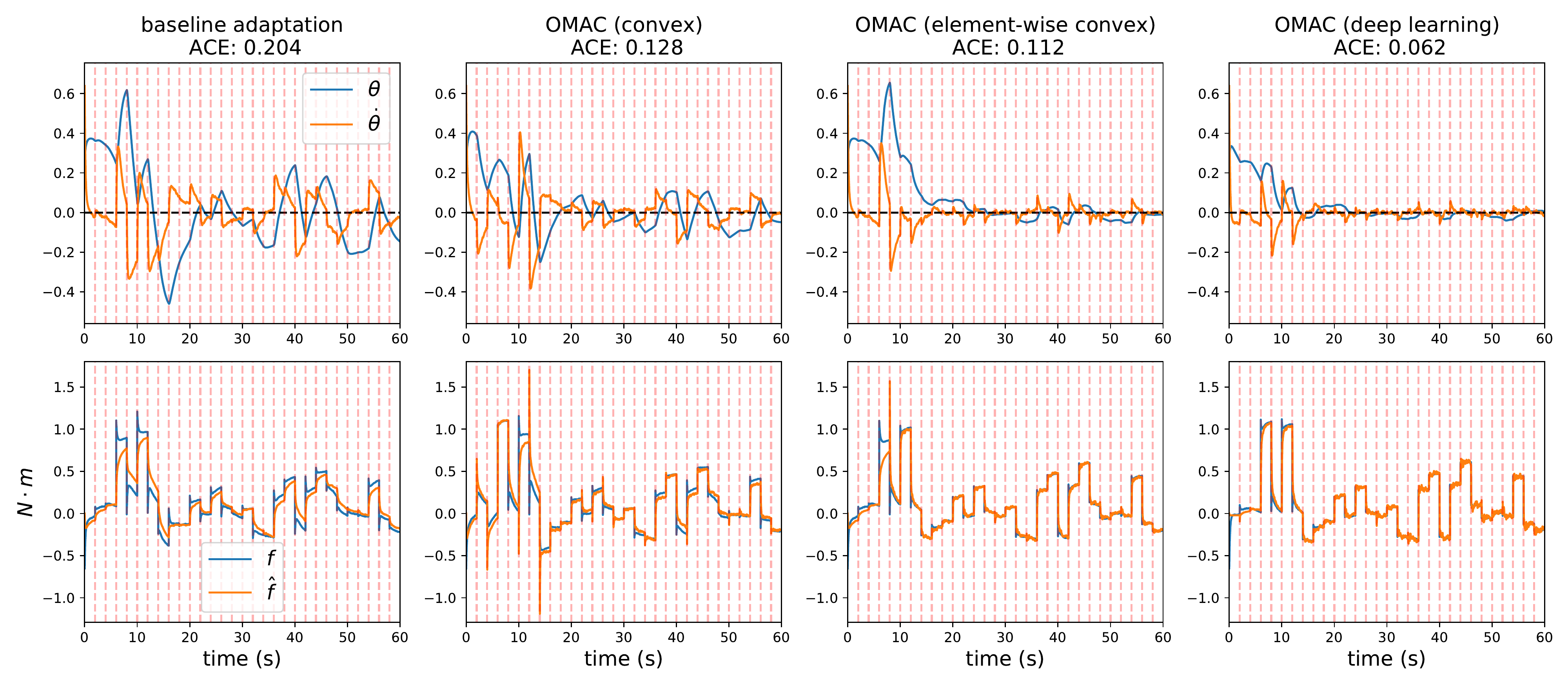}
    \caption{Trajectories (top) and force predictions (bottom) in the pendulum experiment from one random seed. The wind condition is switched randomly every \SI{2}{s} (indicated by the dashed red lines). The performance of OMAC improves as it encounters more environments while baseline not.}
    \label{fig:pendulum_result}
\end{figure}

In experiments, we consider a nonlinear pendulum dynamics with unknown gravity, damping and external 2D wind $w=[w_x;w_y]\in\mathbb{R}^2$. The continuous-time dynamics model is given by
\begin{equation}
ml^2\ddot{\theta}-ml\hat{g}\sin\theta = u + \underbrace{f(\theta,\dot{\theta},c(w))}_{\text{unknown}},
\end{equation}
where 
\begin{equation}
\begin{aligned}
&f(\theta,\dot{\theta},c(w)) = \underbrace{\vec{l}\times F_{\mathrm{wind}}}_{\text{air drag}} - \underbrace{\alpha_1\dot{\theta}}_{\text{damping}} + \underbrace{ml(g-\hat{g})\sin\theta}_{\text{gravity mismatch}}, \\
&F_{\mathrm{wind}} = \alpha_2\cdot\|r\|_2\cdot r,r=w-\begin{bmatrix}l\dot{\theta}\cos\theta \\ -l\dot{\theta}\sin\theta\end{bmatrix}.
\end{aligned}    
\end{equation}
This model generalizes the pendulum with external wind model in \cite{liu2020robust} by introducing extra modelling mismatches (e.g., gravity mismatch and unknown damping). In this model, $\alpha_1$ is the damping coefficient, $\alpha_2$ is the air drag coefficient, $r$ is the relative velocity of the pendulum to the wind, $F_{\mathrm{wind}}$ is the air drag force vector, and $\vec{l}$ is the pendulum vector. Define the state of the pendulum as $x=[\theta;\dot{\theta}]$. The discrete dynamics is given by
\begin{equation}
    x_{t+1} = \begin{bmatrix}
    \theta_t + \delta\cdot\dot{\theta}_t \\
    \dot{\theta}_t + \delta\cdot\frac{ml\hat{g}\sin\theta_t+u_t+f(\theta_t,\dot{\theta}_t,c)}{ml^2}
    \end{bmatrix} = \underbrace{\begin{bmatrix} 1 & \delta \\ 0 & 1
    \end{bmatrix}}_{A}x_t + \underbrace{\begin{bmatrix} 0 \\ \frac{\delta}{ml^2}
    \end{bmatrix}}_{B} (u_t+ml\hat{g}\sin\theta_t+f(x_t,c)),
\end{equation}
where $\delta$ is the discretization step. We use the controller structure $u_t = -Kx_t - ml\hat{g}\sin\theta_t - \hat{f}$ for all 6 controllers in the experiments, but different controllers have different methods to calculate $\hat{f}$ (e.g., the \textbf{no-adapt} controller uses $\hat{f}=0$ and the \textbf{omniscient} one uses $\hat{f}=f$). We choose $K$ such that $A-BK$ is stable (i.e., the spectral radius of $A-BK$ is strictly smaller than 1), and then the e-ISS assumption in Assumption \ref{assump:ISS} naturally holds. We visualize the pendulum experiment results in \cref{fig:pendulum_result}.

\subsubsection{Quadrotor dynamics model and controller design}
Now we introduce the quadrotor dynamics with aerodynamic disturbance. Consider states given by global position, $p \in \mathbb{R}^3$, velocity $v \in \mathbb{R}^3$, attitude rotation matrix $R \in \mathrm{SO}(3)$, and body angular velocity $\omega \in \mathbb{R}^3$. Then dynamics of a quadrotor are
\begin{subequations}
\begin{align}
\dot{p} &= v, &  
m\dot{v} &= mg + Rf_T + f,\label{eq:pos-dynamics} \\ 
\dot{R}&=RS(\omega), & 
J\dot{\omega} &= J \omega \times \omega  + \tau,
\label{eq:att-dynamics}
\end{align}
\label{eq:quadrotor-dynamics}
\end{subequations}
where $m$ is the mass, $J$ is the inertia matrix of the quadrotor, $S(\cdot)$ is the skew-symmetric mapping, $g$ is the gravity vector, $f_T = [0, 0, T]^\top$ and $\tau = [\tau_x, \tau_y, \tau_z]^\top$ are the total thrust and body torques from four rotors, and $f = [f_x, f_y, f_z]^\top$ are forces resulting from unmodelled aerodynamic effects and varying wind conditions. In the simulator, $f$ is implemented as the aerodynamic model given in~\cite{shi2020adaptive}. 

\textbf{Controller design.} Quadrotor control, as part of multicopter control, generally has a cascaded structure to separate the design of the position controller, attitude controller, and thrust mixer (allocation). In this paper, we incorporate the online learned aerodynamic force $\hat{f}$ in the position controller via the following equation:
\begin{equation}
    f_d = -mg - m(K_P\cdot p + K_D\cdot v) - \hat{f},
\end{equation}
where $K_P,K_D\in\B{R}^{3\times3}$ are gain matrices for the PD nominal term, and different controllers have different methods to calculate $\hat{f}$ (e.g., the \textbf{omniscient} controller uses $\hat{f}=f$). Given the desired force $f_d$, a kinematic module decomposes it into the desired $R_d$ and the desired thrust $T_d$ so that $R_d\cdot[0,0,T_d]^\top\approx f_d$. Then the desired attitude and thrust are sent to a lower level attitude controller (e.g., the attitude controller in~\cite{brescianini_nonlinear_2013}).  

\end{document}